\global\long\def\poly{\mathrm{poly}}
\global\long\def\supp{\mathrm{supp}\,}
\DeclareMathOperator{\findheavyfourier}{\textsc{FindHeavyFourier}}
\DeclareMathOperator{\findcandidate}{\textsc{FindCandidate}}
\DeclareMathOperator{\findset}{\textsc{FindSet}}
\DeclareMathOperator{\learnfouriercoefficients}{\textsc{LearnFourier}}
\DeclareMathOperator{\tester}{\textsc{Tester}}
\newcommand{\hP}{\hat \cP}
\newcommand{\hD}{\hat \cD}
\newcommand{\hQ}{\hat \cQ}
\newcommand{\hZ}{\hat \cZ}
\newcommand{\hR}{\hat \cR}
\newcommand{\boolset}{\{\pm 1\}}
\newcommand{\zerooneset}{\{0, 1\}}
\renewcommand{\hat}{\widehat}
\newcommand*\ie{i.\kern.1em e.\ }
\newcommand*\eg{e.\kern.1em g.\ }
\newcommand*\cf{c.\kern.1em f.\ }
\newcommand*\almev{a.\kern.1em e.\ }
\newtheorem*{theorem*}{Theorem}
\newtheorem{lemma}{Lemma}[section]
\newtheorem{definition}[lemma]{Definition}
\theoremstyle{definition}
\newtheorem{observation}{Observation}
\theoremstyle{plain}
\newcommand{\ld}{\left}
\newcommand{\rd}{\right}
\newcommand{\cE}{\mathcal E}
\newcommand{\cJ}{\mathcal J}
\newcommand{\cU}{\mathcal U}
\newcommand{\bbR}{\mathbb R}
\newcommand{\cP}{\mathcal P}
\newcommand{\cQ}{\mathcal Q}
\newcommand{\cF}{\mathcal F}
\newcommand{\cD}{\mathcal D}
\newcommand{\cZ}{\mathcal Z}
\newcommand{\cA}{\mathcal A}
\newcommand{\cI}{\mathcal I}
\newcommand{\cR}{\mathcal R}
\newcommand{\one}{\mathbbm 1}
\newcommand{\bE}{\ensuremath{\mathbb{E}}}
\newcommand{\bP}{\ensuremath{\mathbb{P}}}
\renewcommand{\epsilon}{\varepsilon}
\DeclareMathOperator{\ber}{Ber}
\newcommand{\Var}[1]{\mathrm{Var} \left[ #1 \right]}
\newcommand{\Ex}[1]{\bE \left[ #1 \right]}
\newcommand{\Exui}[2]{\bE_{#1} \left[ #2 \right]}
\newcommand{\Exu}[2]{\underset{#1} \bE \left[ #2 \right] }
\renewcommand{\Pr}[1]{\bP \left[ #1 \right]} 
\newcommand{\Pru}[2]{\underset{ #1 }\bP \left[ #2 \right]}
\newcommand{\Prui}[2]{\bP_{#1} \left[ #2 \right]}
\newcommand{\lpnorm}[2]{\left|\left| #2 \right |\right |_{ #1 }}
\newcommand{\abs}[1]{\left| #1 \right |}
\newcommand{\E}{\bE}
\definecolor{ama-iro}{RGB}{0, 158, 243.0}
\definecolor{fuyu-gaki}{RGB}{251, 74, 52}
\definecolor{momiji}{RGB}{245, 70, 111}
\definecolor{hotaru-bi}{RGB}{229,221,58} 
\definecolor{kon-peki}{RGB}{1,120,217}
\definecolor{shin-kai}{RGB}{77,98,152}
\definecolor{shin-ryoku}{RGB}{1,145,97}
\definecolor{yama-budo}{RGB}{171,14,122}
\title{New Statistical and Computational Results for Learning Junta Distributions}
\author{Anonymous Authors}
\author[1]{Lorenzo Beretta}
\affil[1]{University of California, Santa Cruz, \texttt{lorenzo2beretta@gmail.com}}
\date{}
\begin{document}

\maketitle

\begin{abstract}
We study the problem of learning junta distributions on $\boolset^n$, where a distribution is a $k$-junta if its probability mass function depends on a subset of at most $k$ variables. We make two main contributions:

\begin{itemize}
\item We show that learning $k$-junta distributions is \emph{computationally} equivalent to learning $k$-parity functions with noise (LPN), a landmark problem in computational learning theory.

\item We design an algorithm for learning junta distributions whose statistical complexity is optimal, up to polylogarithmic factors.
Computationally, our algorithm matches the complexity of previous (non-sample-optimal) algorithms.
\end{itemize}
Combined, our two contributions imply that our algorithm cannot be significantly improved, statistically or computationally, barring a breakthrough for LPN. 
\end{abstract}

\section{Introduction}

We study the problem of learning $k$-junta distributions, as first introduced by Aliakbarpour, Blais, and Rubinfeld \cite{aliakbarpour2016learning}.
Given $n \in \mathbb N$ and $k \leq n$, a distribution $\cP$ supported on $\boolset^n$ is a $k$-junta distribution if its probability mass
function $\cP(x) = \mathbb P_{y \sim \cP}[y = x]$ 
is a $k$-junta function, 
where a $k$-junta function on $\boolset^n$ is a function that only depends on $k$ out of $n$ input variables\footnote{
More generally, \cite{aliakbarpour2016learning} defines a $k$-junta distribution with respect to a fixed distribution $\cD$. For $k \leq n$ and a fixed distribution $\cD$ supported on $\boolset^n$, a distribution $\cP$ over $\boolset^n$ is a $k$-junta distribution with respect to $\cD$ if there exists a size-$k$ set of coordinates $J \subseteq [n]$ such that, for every $x \in \boolset^k$, the distributions $\cP$ and $\cD$ conditioned on coordinates in $J$ being set according to $x$ are equal. When $\cD$ is the uniform distribution, the above definition is equivalent to the requirement that the probability mass function of $\cP$ is a $k$-junta function.
}.
The objective of the learning problem is to design algorithms that, given i.i.d. samples from an unknown $k$-junta distribution $\cP$ over $\boolset^n$, output the p.m.f. of a hypothesis distribution $\cD$ that satisfies $||\cD - \cP||_{TV} \leq \varepsilon$.

In this work, we show that learning junta distributions is \emph{computationally} as hard as learning parity functions with noise, a landmark problem in learning theory (\Cref{fig:reductions}).
We complement this result by designing a computationally-efficient algorithm for learning junta distributions that achieves almost optimal statistical complexity, improving over previous work (\Cref{tab:algorithmic results}).

\subsection{Related Work}

\paragraph*{Feature selection and learning juntas}
A key challenge in machine learning is learning target concepts despite the presence of irrelevant features \cite{chandrashekar2014survey}.
As observed in \cite{blum1997selection}, the task of PAC learning in the presence of irrelevant features can be formalized as the problem of learning the concept class of $k$-junta functions for $k \ll n$.  
Likewise, feature selection in the context of \emph{distribution learning} is formalized by the task of learning junta distributions.
Statistical and computational aspects of learning (and testing) juntas have been the subject of extensive work \cite{valiant2015finding, mossel2004learning, sauglam2018near, bshouty2018exact, servedio2015adaptivity, blais2008improved, arvind2009parameterized, arpe2007learning, chen2023testing, bao2023testing, kearns1998efficient, feldman2006new, blum2003noise}.

\paragraph*{Learning junta functions}
Computationally, learning junta functions is hard.
In the \emph{distribution-free} setting, learning $k$-juntas takes $n^{\Omega(k)}$ time, assuming randomized ETH \cite{koch2023superpolynomial}.
Interestingly, characterizing the complexity of learning juntas with respect to the \emph{uniform distribution} proved to be an elusive problem that resisted decades of attempts \cite{mossel2004learning,valiant2015finding, blum1994relevant, feldman2006new, blum2003open}. The fastest algorithm to date runs in time $\approx n^{0.6 k}$, which is only moderately faster than brute-force \cite{valiant2015finding, alman2018illuminating}.

\paragraph*{Learning parity functions with noise}

Learning a function $f : \boolset^n \rightarrow \boolset$ under distribution $\cD$ with noise rate $\eta < 1/2$ means recovering $f$ while observing pairs $(x, f(x) \cdot b)$, where $x \sim \cD$ and $b \in \boolset$ is a Rademacher random variable with parameter $\eta$. A function $f$ is a parity with relevant variables $J \subseteq [n]$ if $f(x) = \prod_{i \in J} x_i$. Learning parities with noise under the uniform distribution (LPN) is a cornerstone problem in computational learning theory, serving as a foundation for several hardness results (see \Cref{fig:reductions}).

Feldman, Gopalan, Khot, and Ponnuswami first demonstrated the central role of LPN by computationally reducing several fundamental problems in learning theory to it \cite{feldman2006new, feldman2009agnostic}. They showed that, under the uniform distribution, agnostically learning $k$-parity functions, learning $k$-junta functions (both with and without noise), and learning Boolean functions that admit a DNF representation of size $2^k$ all reduce to learning $k$-parities with noise.
Furthermore, LPN can be reduced to numerous other learning problems: learning halfspaces \cite{klivans2014embedding} and ReLU functions \cite{goel2019time} under the Gaussian distribution, learning submodular functions \cite{feldman2013representation} and agnostically learning halfspaces under the uniform distribution \cite{kalai2008agnostically}. 

LPN is widely believed to be computationally hard. Kearns established that LPN requires $\Omega(n^k)$ constant-noise queries in the statistical query (SQ) model \cite{kearns1998efficient, barak2022hidden}.
The learning with errors (LWE) problem, a generalization of LPN over $\mathbb{F}_q$, is widely conjectured to be hard and underpins modern lattice-based cryptography \cite{regev2009lattices}. Moreover, LPN itself has been leveraged as a hardness assumption in cryptographic constructions \cite{kiltz2017efficient}.

The best known algorithm for sparse LPN ($k \ll n$) runs in time $\approx n^{0.8k}$ \cite{valiant2015finding, alman2018illuminating}, and neural networks trained with SGD encounter the same $n^k$ computational barrier \cite{barak2022hidden}.
Finally, for $k \approx n$ the fastest known algorithm for LPN runs in time $2^{\frac{n}{\log n}}$, and extending this to $n^{\frac{k}{\log k}}$ for arbitrary $k$ remains a major open problem \cite{blum2003noise}.

\paragraph*{Learning junta distributions}
The problem of learning junta distributions (LJD) was introduced by Aliakbarpour, Blais, and Rubinfeld \cite{aliakbarpour2016learning}. 
In their work, they observed that the \emph{cover method}\footnote{See \cite{diakonikolas2016learning} for a general introduction to the method.} gives an algorithm for LJD with (essentially) optimal sample complexity (see \Cref{tab:algorithmic results} for exact  statistical and computational bounds).
Computationally, the cover-method algorithm suffers from a doubly-exponential dependence on $k$.
The main algorithmic contribution of \cite{aliakbarpour2016learning} is to design an algorithm that achieves low sample complexity ($\approx 2^{2k}\log n$) while being computationally efficient ($\approx n^k$).
While a computational cost of $n^k$ does not look efficient at first sight, \cite{aliakbarpour2016learning} proved that the problem of learning junta functions without noise (LJ) reduces to LJD, which rules out algorithms substantially faster than $n^k$, barring a breakthrough for LJ.

Chen, Jayaram, Levi and Waingarten studied the problem of learning junta distributions in a stronger model featuring \emph{subcube conditioning} \cite{chen2021learning}.
Subcube conditioning allows to sample $x$ conditioned on $x_i = q_i$ for all $i \in I$, given $I \subseteq [n]$ and $q\in \boolset^I$.
The power of this stronger model makes the problem computationally easier; in fact, they show an algorithm running exponentially faster than previous work. Surprisingly, this stronger model does not make the problem any easier statistically, as the lower bounds of \cite{chen2021learning} match the statistical complexity achieved by the cover method.  

Finally, Escudero Gutiérrez improved the algorithmic result of \cite{aliakbarpour2016learning} statistically, shaving a factor $2^k$ while keeping the computational cost at $\approx n^k$ \cite{escudero2024learning}.

\paragraph*{Truncated distributions}
Learning junta distributions connects with an emerging line of work on testing distributions truncated by low-degree polynomials.
A distribution is truncated by a function $f$ if it is obtained by conditioning a ground distribution $x\sim \cD$ on the value of $f(x)$.
Recently, testing junta truncation \cite{he2023testing}, as well as low-degree polynomial truncation \cite{de2024detecting}, have been considered.
Apparently, a uniform distribution truncated by a junta function is a junta distribution, so junta truncation testing reduces to LJD. 
Moreover, juntas are low-degree polynomials, so low-degree polynomial truncation is a generalization of junta truncation. 

\subsection{Our Results}
As stated above, \cite{aliakbarpour2016learning} reduced the problem of learning $k$-junta functions without noise (LJ) to that of learning $k$-junta distributions (LJD).
Since improving the computational complexity for LJ is a long-standing open problem, their reduction serves as a proof of \emph{conditional hardness} for LJD.
In this work, we strengthen this message and prove that LJD is computationally equivalent to learning $k$-parity functions with noise (LPN). See \Cref{fig:reductions} for a summary of these reductions.

\begin{restatable}{theorem}{MainTheoremComplexity}
\label[theorem]{thm:main thm complexity}
The computational complexity of these two problems is equal, up to $\poly(2^k, n)$ factors.
\begin{itemize}
    \item Learning $k$-parity functions over $\boolset^n$ with respect to the uniform distribution with noise rate $\eta = \frac 1 2 - 2^{-O(k)}$
    \item Learning $k$-junta distributions over $\boolset^n$ with sample access. 
\end{itemize}
\end{restatable}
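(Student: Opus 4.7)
The goal is to prove the computational equivalence by exhibiting two reductions—LPN-to-LJD and LJD-to-LPN—each incurring only $\poly(n, 2^k)$ overhead. Throughout, the key conceptual bridge is that a noisy parity sample is naturally a sample from a junta distribution on one extra coordinate: writing $b$ for the Rademacher noise variable, the joint distribution of $(x, \chi_J(x)\cdot b)$ on $\boolset^{n+1}$ has p.m.f.\ depending only on $J\cup\{n+1\}$.

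\textbf{LPN reduces to LJD.} Given an LPN instance with secret parity set $J$, $|J|=k$, and noise rate $\eta = \tfrac{1}{2} - 2^{-O(k)}$, the samples $(x,y) \in \boolset^n\times\boolset$ are distributed as the $(k+1)$-junta distribution $\cP$ on $\boolset^{n+1}$ with
\begin{align*}
\cP(x, y) \;=\; \frac{1}{2^{n+1}}\Bigl(1 + (1-2\eta)\,\chi_{J\cup\{n+1\}}(x,y)\Bigr).
\end{align*}
I feed these samples to the LJD algorithm to obtain a pointwise-evaluable p.m.f.\ $\cD$ with $\|\cD-\cP\|_\tv \leq \varepsilon$ for $\varepsilon$ a small enough multiple of $2^{-k}$. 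To extract $J$, I apply the Kushilevitz--Mansour algorithm to the function $h(z) := 2^{n+1}\cD(z)$ at threshold $\theta = \Theta(2^{-k})$. Since $\hat h(J\cup\{n+1\}) \approx 1-2\eta = 2^{-O(k)}$ is the unique nontrivial heavy Fourier coefficient of $h$, KM outputs a short list containing $J\cup\{n+1\}$ in time $\poly(n, 2^k)$; each candidate can be confirmed by an empirical correlation estimate on fresh LPN samples.

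\textbf{LJD reduces to LPN.} The plan is to use the LPN oracle to identify the relevant coordinate set $J$, after which the marginal $\cP_J$ on $\boolset^J$ can be estimated from $\poly(2^k, 1/\varepsilon)$ further samples by a straightforward empirical estimator. The density $g(x) := 2^n\cP(x)$ is a $k$-junta function on $J$ whose Fourier coefficients $\hat g(S) = \bE_{x\sim\cP}[\chi_S(x)]$ vanish outside $2^J$, so identifying $J$ reduces to recovering the Fourier support of $g$ from $\cP$-samples. I reduce this task to LPN: for each LJD sample $x \sim \cP$, draw an independent $r \sim U$, and emit a labeled example $(z, \ell) = (x\oplus r, \ell(x, r))$, where $z$ is uniform on $\boolset^n$ and $\ell$ is produced by a randomized construction. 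A computation shows $\bE[\ell\,\chi_S(z)]$ is nontrivial only for $S$ in the Fourier support of $g$, so any parity returned by the LPN oracle necessarily lies in $2^J$. Iterating over $\poly(n, 2^k)$ LPN calls with independently randomized labels and taking the union of the returned sets recovers $J$ with high probability.

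\textbf{Main obstacle.} The hard part is the second direction: designing the randomized labeling $\ell$ so that the ``hidden parity'' of the resulting LPN instance is genuinely determined by the unknown Fourier support of $g$, rather than by our own choice. The noise-rate regime $\eta = \tfrac{1}{2} - 2^{-O(k)}$ in the theorem is dictated precisely by the worst-case magnitude $|\hat g(S)| = 2^{-O(k)}$ of the Fourier coefficients, and the construction must carefully calibrate the signal-to-noise ratio so that LPN remains within the specified regime. I expect the resolution to follow and adapt the Feldman--Gopalan--Khot--Ponnuswami reduction from learning $k$-junta functions with noise to LPN, replacing their oracle-generated noisy labels with labels synthesized from LJD samples paired with fresh uniform bits.
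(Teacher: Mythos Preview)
Your first direction (LPN $\to$ LJD) is correct and essentially equivalent to the paper's, with a pleasant variation: you embed the LPN sample $(x,y)$ directly as a point in $\boolset^{n+1}$, whereas the paper \emph{filters} (keeps $x$ only when $y=1$) to get a $k$-junta distribution on $\boolset^n$. Both produce a junta distribution whose unique nontrivial Fourier coefficient sits at $J$ (or $J\cup\{n+1\}$). One caution: running Kushilevitz--Mansour on $h=2^{n+1}\cD$ requires a bound on $\|h\|_\infty$ or $\|h\|_2$, which you do not get from a TV guarantee if the LJD learner is improper (a single point can carry mass $\Theta(\varepsilon)$, making $h$ blow up). The paper avoids this by thresholding the learned p.m.f.\ to a Boolean $f$ and then testing each coordinate via the sign of $f(w)f(w\oplus e_j)$; that argument is robust to improper learners and you should use it instead of KM.

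Your second direction (LJD $\to$ LPN) has a genuine gap. You write $z=x\oplus r$ with $r\sim U$ and promise a label $\ell(x,r)$ whose parity-correlations live in the Fourier support of $g=2^n\cP$, deferring the construction to ``adapt FGKP.'' But FGKP's reduction starts from \emph{labeled} samples $(x,f(x)\cdot b)$; here you only have unlabeled $x\sim\cP$, and synthesizing the label is exactly the nontrivial step. Once $r$ is uniform, $z$ is uniform regardless of $\cP$, so all signal must pass through $\ell$; the natural candidates (e.g.\ $\ell=\chi_T(r)$) give $\bE[\ell\,\chi_S(z)]=\hat g(S)\cdot[S=T]$, i.e.\ the ``secret'' is the set $T$ you chose yourself. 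More generally, any $\ell$ depending only on $r$ yields a secret in your own hands, and any $\ell$ depending on $x$ alone gives a non-uniform $z$-conditional, so $(z,\ell)$ is not an LPN instance. The paper resolves this with a \emph{two-step} reduction: first, FGKP-style structured noise $x\mapsto x\oplus A^\top p$ (with $A$ a random $(k{+}1)\times n$ matrix) zeros all but one Fourier coefficient with probability $\Omega(2^{-k})$, producing a \emph{noisy parity distribution} on some unknown $J'\subseteq J$; second---and this is the new idea beyond FGKP---guess a coordinate $j$, flip bit $j$ with probability $1/2$, and set $\ell$ to the flip indicator. The point is that the marginal of the flipped sample is uniform \emph{only because} the distribution has already been sparsified to a noisy parity and $j\in J'$; for a general junta distribution this bit-flip does not give uniform $z$. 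Your single XOR-with-uniform does too much (it erases the signal), while FGKP alone does too little (it sparsifies but produces no label). You need both steps, and you should spell out the second one.
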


Since LPN is a central problem in computational learning theory, we believe that \Cref{thm:main thm complexity} essentially closes the question about the computational complexity of LJD. 

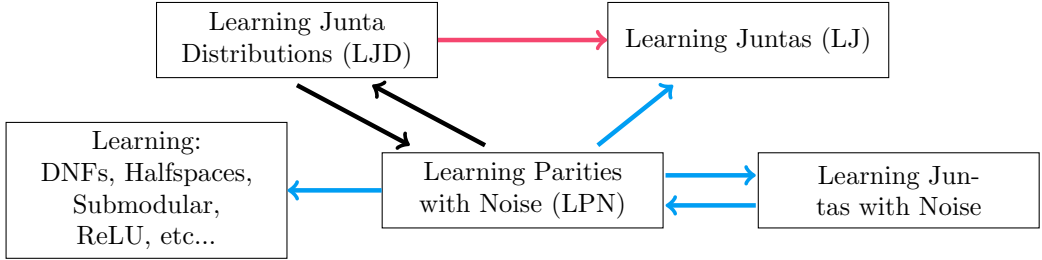
\begin{figure}
    \begin{tikzpicture}[
        node distance=2cm and 2.5cm,
        every node/.style={draw, text width=3.5cm, align=center, minimum height=1cm}
    ]
        \node (A) at (0,2) {Learning Junta Distributions (LJD)};
        \node (B) at (6,2) {Learning Juntas (LJ)};

        \node (C) at (-2,0) {Learning: \\  DNFs, Halfspaces, \\ Submodular, \\ ReLU, etc...};
        \node (D) at (3,0) {Learning Parities with Noise (LPN)};
        \node (E) at (8,0) {Learning Juntas with Noise};

        \draw[->, line width=.6mm] (0, 1.4) -- (1.5, .6);
        \draw[->, line width=.6mm,] (2.5, .6) -- (1, 1.4);
        \draw[->, line width=.6mm, momiji] (A) -- (B);
        \draw[->, line width=.6mm, ama-iro] (D) -- (C);
        \draw[->, line width=.6mm, ama-iro] (4, .6) -- (5, 1.4);
        \draw[->, line width=.6mm, ama-iro] (6.1, -.2) -- (4.9, -.2);
        \draw[->, line width=.6mm, ama-iro] (4.9, .2) -- (6.1, .2);

    \end{tikzpicture}
    \caption{An arrow from $A$ to $B$ means that problem $B$ reduces to problem $A$ (i.e., $A$ is harder than $B$). The blue arrows were proven in previous work,
    the red arrow was proven in \cite{aliakbarpour2016learning} and the black arrows are proven in this work.} \label{fig:reductions}
\end{figure}

On the algorithmic side, we design a new algorithm for LJD which improves over \cite{aliakbarpour2016learning} and \cite{escudero2024learning}.
In particular, \cite{escudero2024learning} achieves a sample complexity of $\approx 2^k \cdot \log n$ and we improve it to $\approx 2^k + \log n$, which is a quadratic speedup for $k = \log\log n$.  Moreover, our algorithm runs in time $\approx n^k$ which we expect to be hard to improve significantly, given \Cref{thm:main thm complexity}. 
Finally, the sample complexity achieved by our algorithm matches the lower bound of \cite{chen2021learning} up to polylogarithmic factors. 
See \Cref{tab:algorithmic results} for an exhaustive history of the computational and statistical complexity of learning junta distributions.

\begin{restatable}{theorem}{MainThmStatistical}
\label[theorem]{thm:main thm statistical}
There exists an algorithm that properly learns $k$-junta distributions up to TV distance $\varepsilon$ using $O(\frac{k}{\varepsilon^2} (2^k + \log n))$ samples and $O(\min(2^n, n^k) \cdot \frac{k}{\varepsilon^2} (2^k + \log n))$ running time.
\end{restatable}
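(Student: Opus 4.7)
The plan is a two-phase algorithm. Phase 1 identifies an approximate junta set $\hat S \subseteq [n]$ of size $k$; Phase 2 learns the marginal of $\cP$ on $\hat S$. The output is $\hat \cP_{\hat S}$ extended uniformly on $[n] \setminus \hat S$. This decomposition mirrors the decomposition of the target sample complexity $O(k\varepsilon^{-2}(2^k + \log n))$: the $O(2^k/\varepsilon^2)$ term is (nearly) optimal for learning an arbitrary distribution on the $2^k$-point alphabet $\boolset^{\hat S}$, while the $O(k\log n/\varepsilon^2)$ term is the cost of selecting $\hat S$ from $\binom{n}{k} \le n^k$ candidates. The crucial improvement over prior work is that these two costs are combined \emph{additively}, not multiplicatively.

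For Phase 2, by standard concentration of the empirical distribution over a $2^k$-element alphabet (e.g., Bretagnolle--Huber--Carol), $O(2^k/\varepsilon^2)$ samples suffice to guarantee $\|\hat \cP_{\hat S} - \cP_{\hat S}\|_1 \le \varepsilon$ with high probability. Since $\cP$ is (approximately) a junta on $\hat S$, the candidate output $\hat \cP_{\hat S}(x_{\hat S}) \cdot 2^{-(n-k)}$ is then $O(\varepsilon)$-close to $\cP$ in TV by the triangle inequality and the tensorization of TV distance under uniform extension.

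For Phase 1, I would enumerate the $\binom{n}{k} \le n^k$ candidate sets $S$ and select among them via a Scheffe-style hypothesis-selection tournament. For each $S$, define the candidate distribution $\cD_S := \hat \cP_S \otimes U_{[n] \setminus S}$ where $\hat \cP_S$ is the empirical marginal on $S$. Each pairwise comparison of $\cD_{S_1}$ and $\cD_{S_2}$ uses the Scheffe set $A = \{x : \cD_{S_1}(x) > \cD_{S_2}(x)\}$, which depends only on $x_{S_1 \cup S_2}$ and is therefore efficiently computable. The classical tournament guarantee returns a winner $\hat S$ with $\|\cP - \cD_{\hat S}\|_{TV} \le O(\varepsilon)$ using $O(\log(n^k)/\varepsilon^2) = O(k \log n/\varepsilon^2)$ samples.

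The main obstacle is to combine the two phases with the additive sample complexity and the stated runtime $O(\min(2^n, n^k) \cdot m)$. The same batch of $m = O(k\varepsilon^{-2}(2^k + \log n))$ samples can be reused for both phases, but the dependency between the $\cD_S$'s and the tournament tests (both functions of the same samples) must be handled carefully; a standard sample-splitting argument suffices, at most doubling the sample count. Runtime-wise, a sequential elimination tournament (rather than all pairs) achieves $O(n^k)$ comparisons, each running in $O(m + 2^{O(k)})$ time; the $\min$ with $2^n$ in the runtime reflects that, for small $n$, one can alternatively iterate directly over the $2^n$ points of the hypercube.
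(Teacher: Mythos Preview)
Your approach is genuinely different from the paper's. You propose a one-shot scheme: form $\binom{n}{k}$ candidate distributions (the empirical marginal on each candidate set, extended uniformly) from a first batch of $O(2^k/\varepsilon^2)$ samples, then run Scheff\'e hypothesis selection over these candidates with a second batch of $O((k\log n)/\varepsilon^2)$ samples. The paper instead iteratively grows a set $N\subseteq J^\star$ of certified relevant coordinates: in each of at most $k$ rounds it invokes a custom \emph{high-confidence tolerant identity tester} (with sample cost $O(\varepsilon^{-2}(s+\log\delta^{-1}))$ rather than the usual $O(\varepsilon^{-2}s\log\delta^{-1})$) to check $\cQ|_S$ against $\cP|_S$ for every size-$k$ set $S\supseteq N$, and when some $S$ fails it Fourier-learns $\cP|_S$ to discover new relevant coordinates. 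Your route is closer in spirit to the cover method but sidesteps its $\varepsilon^{-\Omega(2^k)}$ blow-up by replacing the net over $\Delta(\boolset^k)$ with a single empirical candidate per junta set; the additive $2^k+\log n$ then falls out of sample splitting rather than from a new tester.

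There is, however, a genuine gap in your runtime argument. A naive ``sequential elimination'' (or knockout) tournament does \emph{not} return an $O(\varepsilon)$-close hypothesis: the Scheff\'e guarantee says only that if one of the two contestants is $\alpha$-close then the winner is at most $(3\alpha+O(\varepsilon))$-close, so the champion's distance to $\cP$ can roughly triple at every change of hands, and after $\Theta(n^k)$ matches the bound is vacuous. The round-robin tournament that does deliver a constant-factor guarantee costs $\binom{n}{k}^2\approx n^{2k}$ comparisons, missing your claimed $O(n^k)$ time. Near-linear-time hypothesis selection with constant approximation does exist in the literature, but it is a nontrivial result you would need to invoke explicitly; without it your runtime is $O(n^{2k}\cdot m)$ rather than $O(n^k\cdot m)$. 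A secondary issue: the per-comparison cost ``$O(m+2^{O(k)})$'' hides that the Scheff\'e set for $\cD_{S_1}$ versus $\cD_{S_2}$ lives on $\boolset^{S_1\cup S_2}$, so computing $\cD_{S_i}(A)$ exactly costs $2^{2k}$, which can dominate $m$ when $\varepsilon$ is not small.
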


\begin{table}[h]
    \centering
    \begin{tabular}{lcc}
        \toprule
         Algorithms & Samples & Time \\
        \midrule
         Cover Method &  $O\left( \frac{1}{\varepsilon^2} \cdot (2^k \log \frac{1}{\varepsilon} + k \log n) \right)$ & $O\left({\binom{n} {k}} \cdot \varepsilon^{-\Omega(2^k)} \right)$ \\
            \addlinespace[5pt]
        \cite{aliakbarpour2016learning} &  $O\left(\frac{k}{\varepsilon^4} \cdot 2^{2k} \log n \right)$ & $\Tilde O \left( \min(2^n, n^k) \cdot \frac{k}{\varepsilon^4} \cdot 2^{2k}\right)$ \\
            \addlinespace[5pt]
        \cite{escudero2024learning} &  $O\left(\frac{k}{\varepsilon^2} \cdot 2^{k} \log n \right)$  &  $O\left( \min(2^n, n^k) \cdot \frac{k}{\varepsilon^2} \cdot 2^{k} \log n \right)$ \\
            \addlinespace[5pt]
        \textbf{This Work} & $O\left(\frac{k}{\varepsilon^2} (2^k + \log n)\right)$  & $O\left(\min(2^n, n^k) \cdot \frac{k}{\varepsilon^2} (2^k + \log n)\right)$ \\
        \addlinespace[3pt]
        \midrule
        Lower Bounds & & \\
        \midrule
            \addlinespace[3pt]
         \cite{chen2021learning} &  $\Omega\left(\frac{1}{\varepsilon^2} \cdot 
         \left( 2^k + \log {\binom{n}{k}}\right) \right)$ &  \\
            \addlinespace[2pt]
        \bottomrule
    \end{tabular}
    \caption{Statistical and computational complexity of learning junta distributions.}
    \label{tab:algorithmic results}
\end{table}

\subsection{Technical Overview: Computational Complexity}
 
In order to prove \Cref{thm:main thm complexity}, we need to reduce learning parities with noise (LPN) to learning junta distributions (LJD) and vice versa.

\subsubsection{Reducing LPN to LJD}
Let $(x, y)$ be a sample-label input for LPN with ground truth $\chi_J(x) = \prod_{j \in J}x_j$. 
The sample $x \in \boolset^n$ is distributed uniformly, whereas $y \in \boolset$ is distributed as $\chi_J(x)$ flipped with probability $\eta < 1/2$. 

We implement a filtering strategy that, given LPN samples $(x, y)$, returns samples $z \sim \cD$, where $\cD$ is a junta distribution. Then, we show that learning $\cD$ suffices to recover $\chi_J$. 
We define $z \leftarrow x$ if $y = 1$ and re-sample $(x, y)$ otherwise.
This ensures that $\cD$ is a mixture of two distributions: (i) The uniform distribution supported on $\{w \in \boolset^n \,|\, \chi_J(w) = 1\}$; (ii) The uniform distribution $\cU_n$ over $\boolset^n$.
In this mixture (i) is weighted $\approx 1-2\eta$ and (ii) is weighted $\approx 2\eta$.
Moreover, $\cD$ is a $k$-junta distribution with relevant variables $J$. 

By running an exponential search, we can assume that we know $1 - 2\eta$ up to a constant factor.
We run our LJD learner on $\cD$ with precision $\varepsilon \cdot (1-2\eta)$, and it returns $\cF$ such that  $\lpnorm{1}{\cF - \cD} \leq\varepsilon \cdot (1-2\eta)$.
Now, define a boolean function $f: \boolset^n \rightarrow \boolset$
as $f(x) = 1$ iff $\cF(x) > 1/2$. 
Our estimate $\cF$ is precise enough so that $f$ satisfies $\Exui{x}{|f - \chi_J|} \leq \varepsilon$.
Notice that since we have query access to the probability mass function of $\cF$, we also have query access to $f$.
In order to retrieve $J$ from $f$, consider $f(w) \cdot f(w \oplus e_j)$ where $w \in \boolset^n$ is uniformly distributed and $w \oplus e_j$ flips the $j$-th bit of $w$. By virtue of $\Exui{x}{|f - \chi_J|} \leq \varepsilon$, we must have that $f(w) \cdot f(w \oplus e_j)$ has a majority of $-1$ for $j \in J$ and a majority of $1$ for $j \not \in J$. To conclude, we just test if $j \in J$ for all $j \in [n]$.

\subsubsection{Reducing LJD to LPN}
In order to reduce LJD to LPN we need to introduce an auxiliary problem: \emph{learning noisy parity distributions} (LNPD).
We say that a junta distribution is a parity distribution with relevant variables $J$ if it coincides with the uniform distribution over $\{w\in \boolset^n\,|\, \chi_J(w) = s\}$ for a fixed $s \in \boolset$.
We generalize this definition and say that a junta distribution is a noisy parity distribution if it is a mixture of a uniform distribution and a parity distribution.
By symmetry, the only non-zero coefficients in the Fourier spectrum of (the p.m.f. of) a noisy parity distribution $\cD$ with relevant variables $J$ 
are $\hD(\emptyset)$ and $\hD(J)$.
The reduction of LJD to LPN develops in two steps: First, we reduce LJD to LNPD; Second, we reduce LNPD to LPN.

\paragraph*{Reducing LJD to LNPD}
The Fourier spectrum of a junta distribution can have as many as $2^k$ non-zero coefficients, whereas a noisy parity distribution has at most two. The idea is to distill our input junta distribution $\cD$ to $\cP$ so that $\cP$ is a noisy parity with relevant variables $J$ such that $\hP(J) = \hD(J)$, for some fixed $J$.
Then, use the LNPD algorithm to learn $\cP$. 

To perform such distillation, we use an idea first introduced in \cite{feldman2006new}:
By carefully adding noise to the distribution $\cD$ it is possible to zero all Fourier coefficients of $\cD$ besides $\hD(J)$ and $\hD(\emptyset)$. Moreover, each $J \subseteq [n]$ with $\hD(J) \neq 0$ has a large enough chance of being left intact. Thus, a coupon-collector style argument ensures that we learn all non-zero Fourier coefficients of $\cD$ in $\approx k \cdot 2^k$ rounds.

\paragraph*{Reducing LNPD to LPN}
For ease of presentation, we show how to reduce learning (non-noisy) parity distributions to learning parity functions (without noise). The reduction carries over to the noisy case.
Let $\cD$ be a parity distribution over relevant variables $J$ and such that $\chi_J(w) = 1$ for all $w \in \supp(\cD)$ (the case $\chi_J(w) = -1$ is symmetric). Let $z \sim \cD$. 
Fix $j \in [n]$ and define $(x, y)$ as follows: Flip the $j$-th bit of $z$ with probability $1/2$ and define that to be $x$; define $y = -1$ if the $j$-th bit was flipped and $y = 1$ otherwise. 
If $j \in J$, then $x$ is distributed uniformly over $\boolset^n$ and $y = \chi_{J}(x)$. Else, if $j \not \in J$ then $y$ is distributed uniformly on $\boolset$ and independently of $x$. 

Then, we solve LPN on samples $(x, y)$.
If it turns out that $y$ is a $k$-parity function of $x$ then we apply the same technique used in the reduction from LPN to LJD above to retrieve $S$ from $f$, and we find $J$. 
If $j \not\in J$ then $y$ is not going to be a parity function of $x$ and we need to redefine $\cD$ using some other $j \in [n]$. By scanning all $j \in [n]$, we are guaranteed that at some point $j \in J$ holds.

\subsection{Technical Overview: Sample-Optimal Algorithm}

Our algorithm learns JDN with fewer samples than previous work; in particular, it replaces the term $2^k \log n$ in the sample complexity of \cite{escudero2024learning} with $2^k + \log n$ (see \Cref{tab:algorithmic results}).
The goal of this overview is to show how to make the term $\log n$ additive, rather than multiplicative. 

\paragraph*{Low-degree algorithms}
Most previous literature on junta distributions, as well as our own algorithm, make use of the low-degree algorithm of Linial, Mansour, and Nisan \cite{linial1993constant}.
In a nutshell, the low-degree algorithm estimates individual Fourier coefficients and outputs a hypothesis function consistent with the estimated Fourier coefficients. 
In \cite{aliakbarpour2016learning}, the authors pointed out that applying the low-degree algorithm to the junta distribution problem requires too many samples. Indeed, an accuracy of $\varepsilon \cdot 2^{-n/2}$ on individual Fourier coefficients is necessary to deliver an $\varepsilon$ error in $L_2$-norm.
To bypass this, Escudero introduced a thresholding of Fourier coefficients, thus ensuring that at most $2^k$ coefficients are estimated to be nonzero \cite{escudero2024learning}.
In this work, we simply apply the low-degree algorithm to distributions over a small domain $\boolset^k$, thus the standard analysis from \cite{linial1993constant} suffices.

\paragraph*{Our template algorithm}
Consider the following template.
We maintain a junta distribution $\cQ$ as our ``best guess'' for $\cD$, along with its set of relevant variables $N \subseteq [n]$ of size at most $k$.
We maintain the invariant that $N \subseteq J$, where $J$ is the true set of relevant variables. 
Given $S \supseteq N $, we can check if the marginals of $\cQ$ and $\cD$ with respect to the variables in $S$ (resp. $\cQ|_S$ and $\cD|_S$) are close in total variation distance.
In order to check the above condition, we draw samples from $\cD$, project them onto the coordinates in $S$ and feed these samples to a closeness tester.

If $\lpnorm{TV}{\cQ|_S - \cD|_S} \leq \varepsilon$ for all $S \in \binom{[n]}{k}$ with $N \subseteq S$, then we can correctly output $\cQ$. 
Else, if we find  $S \in \binom{[n]}{k}$ with $N \subseteq S$ such that $\lpnorm{TV}{\cQ|_S - \cD|_S} > \varepsilon$, then we learn the Fourier coefficients of $\cD|_S$ using the low-degree algorithm.
Then, we update the distribution $\cQ$ so that it includes the newly recovered Fourier coefficients, and augment $N$ accordingly.
Each iteration of this scheme increases the size of $N$, so we are done after $k$ iterations.

\paragraph*{Sample complexity analysis}
First, observe that while making multiple calls to an algorithm we can certainly re-use the same samples, and the price we pay is that we need to union bound over the error probability.
The low-degree algorithm operates on distributions over $\boolset^k$, so the cost of an individual call is negligible. Moreover, we make at most $k$ calls to it, so a standard boosting of the success probability increases its sample complexity by at most $\log k$.
On the other hand, we make up to $k \cdot n^k$ calls to the closeness tester. Each individual call costs at least $\tilde \Omega(2^k)$ by known lower bounds \cite{valiant2017estimating}. So, we cannot afford a standard boosting of the success probability, as it would introduce a multiplicative term $\log (n^k) = k \log n$.

To counter this, we develop a closeness tester that succeeds with probability $1 - \delta$ and uses only $s + \log 1/ \delta$ samples, where $s$ is the support size.
The design of our tester uses the TV distance between the empirical probability mass function and the tested distribution as an estimator, borrowing ideas from \cite{diakonikolas2018sample}.

\subsection{Organization}
In \Cref{sec:preliminaries} we introduce notation. In \Cref{sec:complexity} we prove the computational equivalence of learning junta distributions and learning parities with noise (\Cref{thm:main thm complexity}). In \Cref{sec:algorithm} we show our new algorithm for the problem of learning junta distributions (\Cref{thm:main thm statistical}).
\section{Preliminaries}
\label{sec:preliminaries}
We denote by $\Delta(A)$ the set of probability distributions over the set $A$.
Given a probability distribution $\cP \in \Delta(\boolset^n)$ and a set $S \subseteq [n]$ we denote by $\cP|_S$ the marginal of $\cP$ with respect to variables in $S$.
We overload the notation and denote by $\cP(\cdot)$ its probability mass function (PMF).
We denote the uniform distribution over $\boolset^n$ by $\cU_n$.
Whenever not specified, expectations are taken with respect to the uniform distribution. 

\paragraph*{Fourier transform}
In this paper we use the Fourier transform over the boolean hypercube. 
For a subset $A \subseteq [n]$ we define the function $\chi_A:\boolset^n \rightarrow \boolset$ such that $\chi_A(x) = \prod_{i \in A} x_i$.
Such functions form the Fourier basis of the vector space of functions $\boolset^n \rightarrow \bbR$. Notice that the Fourier basis is orthonormal with respect to the dot product $\langle f, g\rangle = \Exui{x}{f(x) \cdot g(x)}$.

Given a function $f: \boolset^n \rightarrow \bbR$ we define its Fourier transform $\hat f: 2^{[n]} \rightarrow \bbR$ as $\hat f(A) = \Exui{x}{f(x) \cdot \chi_A(x)}$. Thus, $f = \sum_{A \subseteq [n]} \hat f (A) \cdot \chi_A$.
Finally, we have
\[
\Exu{x\sim \cU_n}{f^2(x)} = 2^{-n} \cdot\lpnorm{2}{f}^2 = \sum_{A \subseteq [n]} \hat f(A)^2.
\]

\paragraph*{Learning functions and distributions}
Fix a class of boolean functions $F \subseteq \{f: \boolset^n \rightarrow \boolset\}$.
Consider an algorithm $\cA$ that takes as input parameters $\varepsilon, \delta > 0$ and  $\{(x_i, f(x_i))\}_{i \in [m]}$ where $x_1 \dots x_m \in \boolset^n$ are i.i.d. samples from a distribution $\cD$ and $f \in F$ is fixed. Moreover, $\cA$ outputs a function $f_\cA$ that satisfies $\Prui{x\sim \cD}{f(x) \neq f_\cA(x)} \leq \varepsilon$ with probability at least $1-\delta$. Then, we say that $\cA$ learns $F$ using $S(n)$ samples and $T(n)$ time under the distribution $\cD$ if $\cA$ takes $S(n) \cdot \poly(\varepsilon^{-1}, \delta^{-1})$ samples and runs in time $T(n) \cdot \poly(\varepsilon^{-1}, \delta^{-1})$.
Moreover, we say that $\cA$ learns $F$ with noise of rate $\eta < 1/2$ if the label $f(x_i)$ in the samples $(x_i, f(x_i))$ is flipped with probability $\eta$.

Likewise, fix a class of distributions $P \subseteq \Delta(\boolset^n)$.
Consider an algorithm $\cA$ that takes as input parameters $\varepsilon, \delta > 0$ and $x_1 \dots x_m$, where $x_1 \dots x_m \in \boolset^n$ are i.i.d. samples from a fixed distribution $\cP \in P$. Furthermore, $\cA$ outputs a distribution $\cP_\cA$ that satisfies $\lpnorm{TV}{\cP - \cP_\cA(x)} \leq \varepsilon$ with probability at least $1-\delta$. Then, we say that $\cA$ learns $P$ using $S(n)$ samples and $T(n)$ time if $\cA$ takes $S(n) \cdot \poly(\varepsilon^{-1}, \delta^{-1})$ samples and runs in time $T(n) \cdot \poly(\varepsilon^{-1}, \delta^{-1})$.

In general, we can require an algorithm that learns a distribution to output either a \emph{generator} or an \emph{evaluator} for the learned distribution \cite{kearns1994learnability}.
A generator is an algorithm that generate samples from the (approximate) target distribution, whereas an evaluator returns the value of the probability mass function on a given query point. In this work, we assume that a distribution learner returns evaluators.

Before delving into the proofs of \Cref{thm:main thm complexity} and \Cref{thm:main thm statistical}, we prove two folklore lemmas that are going to be useful throughout the paper.

\begin{lemma}
\label[lemma]{lem:k-junta closeness through projections}
Let $\cP, \cQ \in \Delta(\boolset^n)$ be junta distributions with set of relevant variables $J \subseteq [n]$, then
\[
||\cP - \cQ||_1 = ||\cP|_{J} - \cQ|_{J}||_1.
\]
\end{lemma}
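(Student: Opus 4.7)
The plan is to unpack the definitions and observe that for a junta distribution with relevant variables $J$, the p.m.f.\ is constant on every fiber of the projection map $x \mapsto x_J$. For such a fiber indexed by $y \in \boolset^J$ there are exactly $2^{n-|J|}$ points $x \in \boolset^n$ with $x_J = y$, and on all of them $\cP(x)$ takes the common value $\cP(x_J = y, x_{[n] \setminus J} = z)$, which is independent of $z$. Summing the p.m.f.\ over the fiber therefore gives the marginal, and more usefully the relation
\[
\cP(x) \;=\; 2^{-(n-|J|)} \cdot \cP|_J(x_J), \qquad \cQ(x) \;=\; 2^{-(n-|J|)} \cdot \cQ|_J(x_J).
\]

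Having established this, the rest is a one-line calculation. I would write
\[
\|\cP - \cQ\|_1 \;=\; \sum_{x \in \boolset^n} |\cP(x) - \cQ(x)| \;=\; \sum_{y \in \boolset^J} \sum_{z \in \boolset^{[n]\setminus J}} 2^{-(n-|J|)} \bigl| \cP|_J(y) - \cQ|_J(y) \bigr|,
\]
and then notice that the inner sum over $z$ contributes the factor $2^{n-|J|}$, which cancels the $2^{-(n-|J|)}$ exactly, yielding $\sum_{y} |\cP|_J(y) - \cQ|_J(y)| = \|\cP|_J - \cQ|_J\|_1$.

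There is no real obstacle here beyond being careful about the normalizations between the joint p.m.f.\ on $\boolset^n$ and the marginal p.m.f.\ on $\boolset^J$. The key (and only) structural input is that both distributions share the \emph{same} set of relevant variables $J$, so their p.m.f.'s are constant on the same fibers, which is what makes the pointwise differences constant on those fibers and thus makes the factor-of-$2^{n-|J|}$ cancellation clean. I would also remark (if needed later in the paper) that the identity extends verbatim to any $S \supseteq J$ by the same argument, since the p.m.f.'s remain constant on fibers of the projection onto any such $S$.
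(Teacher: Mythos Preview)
Your proof is correct and takes essentially the same approach as the paper: both arguments partition $\boolset^n$ into the fibers $\{x : x_J = y\}$ and use that $\cP,\cQ$ are constant on each fiber to collapse the $\ell_1$ sum to one over $\boolset^J$. The only cosmetic difference is that you make the constant value $2^{-(n-|J|)}\cP|_J(y)$ explicit and cancel the factor, whereas the paper simply pulls the absolute value outside the inner sum and recognizes $\sum_{y\in S_x}\cP(y)=\cP|_J(x)$ directly.
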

\begin{proof}
For each $x \in \boolset^{J}$, define $S_x = \ld\{y \in \boolset^n\,|\,  y_j = x_j \text{ for all } j \in J \rd\}$ and notice that $\cP$ and $\cQ$ are both constant on $S_x$.
\begin{align*}
||\cP - \cQ||_1 &= \\
\sum_{x \in \boolset^{J}} \sum_{y \in S_x} |\cP(y) - \cQ(y)| &= \\
\sum_{x \in \boolset^{J}} \ld|\sum_{y \in S_x} \cP(y) - \cQ(y)\rd| &= \\
\sum_{x \in \boolset^{J}} \ld|\cP|_{J}(x) - \cQ|_{J}(x)\rd| &= \\
||\cP|_{J} - \cQ|_{J}||_1.
\end{align*}
\end{proof}

\begin{lemma}
\label[lemma]{lem:fourier coefficients of marginals}
Given a distribution $\cP$ over $\boolset^n$ and a size-$k$ set $S \subseteq [n]$ we have that for each $J \subseteq S$ 
\[
\hat{\cP|_S}(J) = 2^{n-k} \cdot \hP(J).
\]
\end{lemma}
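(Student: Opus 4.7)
The plan is to unfold the definitions on both sides and use the fact that, since $J \subseteq S$, the character $\chi_J$ depends only on the coordinates in $S$, so it factors nicely through the marginalization.

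First I would write the Fourier coefficient of the marginal distribution explicitly. Since $\cP|_S$ is supported on $\boolset^S$ (of size $2^k$), the convention used in \Cref{sec:preliminaries} gives
\[
\hat{\cP|_S}(J) = \Exu{y \sim \cU_S}{\cP|_S(y)\cdot \chi_J(y)} = 2^{-k} \sum_{y \in \boolset^S} \cP|_S(y)\, \chi_J(y).
\]
Next I would substitute the definition of the marginal, $\cP|_S(y) = \sum_{z \in \boolset^{[n]\setminus S}} \cP(y,z)$, where $(y,z)$ denotes the element of $\boolset^n$ agreeing with $y$ on $S$ and with $z$ on $[n]\setminus S$.

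Then I would swap the order of summation and use the crucial observation that, because $J \subseteq S$, we have $\chi_J(y,z) = \prod_{i\in J} y_i = \chi_J(y)$, independent of $z$. This yields
\[
\hat{\cP|_S}(J) = 2^{-k} \sum_{y \in \boolset^S}\sum_{z \in \boolset^{[n]\setminus S}} \cP(y,z)\, \chi_J(y,z) = 2^{-k}\sum_{x \in \boolset^n} \cP(x)\, \chi_J(x).
\]
Finally, I would rewrite the last sum as an expectation under $\cU_n$, picking up the factor $2^n$: this gives $2^{-k}\cdot 2^n \cdot \Exu{x\sim \cU_n}{\cP(x)\chi_J(x)} = 2^{n-k}\cdot \hP(J)$, as required. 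There is no real obstacle here; the only point to be careful about is the normalization, since the two Fourier transforms live on hypercubes of different dimensions ($\boolset^k$ vs $\boolset^n$), which is exactly the source of the $2^{n-k}$ factor.
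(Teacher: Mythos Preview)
Your proof is correct and follows essentially the same approach as the paper: both rely on the observation that, since $J \subseteq S$, the character $\chi_J$ depends only on the $S$-coordinates, so $\sum_{x}\cP(x)\chi_J(x) = \sum_{y}\cP|_S(y)\chi_J(y)$ and the $2^{n-k}$ factor comes purely from the differing normalizations on $\boolset^n$ versus $\boolset^k$. The paper packages this via the identity $\hat{\cD}(J) = 2^{-\dim}\bigl(\Pr_{x\sim\cD}[\chi_J(x)=1]-\Pr_{x\sim\cD}[\chi_J(x)=-1]\bigr)$ applied on both hypercubes, whereas you unfold the sums directly; the content is the same.
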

\begin{proof}
For any distribution $\cD$ over $\boolset^n$ we have
\begin{align*}
\hat{\cD}(J) = 2^{-n} \cdot \left(\Pru{x\sim \cD}{\Pi_{i\in J} x_i = 1} - \Pru{x\sim \cD}{\Pi_{i\in J} x_i = -1} \right).
\end{align*}
Applying this identity to both sides proves the desired result.
\end{proof}
\section{The Complexity of Learning Junta Distributions}
\label{sec:complexity}

In this section we prove the following theorem.

\MainTheoremComplexity*

In order to prove \Cref{thm:main thm complexity}, we show mutual reductions between learning junta distributions (LJD) to learning parity functions with noise (LPN).
In order to reduce LJD to LPN we need to introduce an auxiliary problem: \emph{learning parity distributions with noise} (LPDN). 
In \Cref{sec:reducing noisy parity distributions to parity functions} we reduce LPDN to LPN (see \Cref{lem:reduction from noisy parity distribution to junta functions}).
In \Cref{sec:reducing junta distribtions to noisy parity distributions} we reduce LJD to LPDN (see \Cref{lem:reduction from junta distribution to parity distribution}).
Finally, in \Cref{sec:reducing noisy parity function to junta distributions} we reduce LPN to LJD (see \Cref{lem:reduction from noisy parities to junta distributions}).

  \begin{figure}
\center
    \begin{tikzpicture}[
        node distance=2cm and 2.5cm,
        every node/.style={draw, text width=2cm, align=center, minimum height=1cm}
    ]
        \node (A) at (0,2) {LPN};
        \node (B) at (10,2) {LJD};
        \node (C) at (5,2) {LPDN};

        \draw[->, line width=.6mm] (B) to[out=165, in=15] (A);
        \draw[->, line width=.6mm] (A) to (C);
        \draw[->, line width=.6mm] (C) to (B);
        
    \end{tikzpicture}
    \caption{An arrow from $A$ to $B$ means that we reduce problem $B$ to problem $A$.} 
    \label{fig:intermediate reductions}
\end{figure}
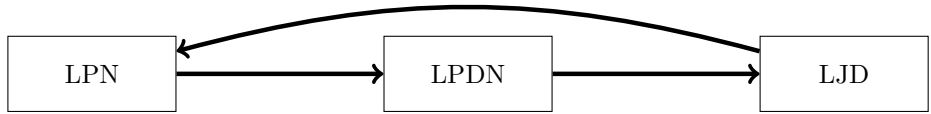

\subsection{Reducing Noisy Parity Distributions to Parity Functions with Noise}
\label{sec:reducing noisy parity distributions to parity functions}

First, we define noisy parity distributions.

\begin{definition}[Noisy parity distribution]
\label[definition]{def:noisy parity distribution}
We say that a distribution $\cD$ over $\boolset^n$ is a noisy parity distribution with relevant variables $J \subseteq [n]$ with noise rate $\eta < 1/2$ if 
\begin{equation}
\label{eq:def of noisy parity}
\hD(S) = \frac 1 {2^n} \cdot \begin{cases}
		1 &\text{if $S = \emptyset$} \\
        \pm(1 - 2\eta) &\text{if $S = J$} \\
        0 &\text{otherwise.}
	\end{cases}
\end{equation}
\end{definition}

\begin{observation}
\label[observation]{obs:noisy parity distribution as mixture}
A noisy parity distribution with noise rate $\eta = 0$ corresponds to the uniform distribution truncated by a parity function (i.e., $x \sim \cU_n$ conditioned on either $\chi_J(x) = \pm 1$). We call such distribution a \emph{parity distribution} and highlight that the PMF of a parity distribution is, indeed, a parity function (up to multiplying it by $-1$).
Moreover, a noisy parity distribution over set $J$ with noise rate $\eta > 0$ coincides with the mixture $(1-2\eta) \cdot \cP + 2\eta \cdot \cU_n$, where $\cP$ is a parity distribution. 
\end{observation}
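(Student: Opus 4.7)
The plan is to verify both assertions directly through the Fourier inversion formula $\cD(x) = \sum_{S\subseteq [n]} \hat\cD(S)\,\chi_S(x)$.

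For the first claim ($\eta=0$), I would plug the Fourier coefficients from \Cref{def:noisy parity distribution} into Fourier inversion to obtain $\cD(x) = \frac{1}{2^n}(1 \pm \chi_J(x))$. Since $\chi_J(x) \in \boolset$, this PMF equals $2^{1-n}$ exactly on the set $\{x : \chi_J(x) = \pm 1\}$ (with the sign determined by \Cref{eq:def of noisy parity}) and $0$ elsewhere; that set has size $2^{n-1}$, so the distribution is the uniform distribution conditioned on the corresponding value of $\chi_J$. This also shows the PMF is a (signed/shifted) parity function.

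For the second claim ($\eta>0$), I would compute the Fourier spectrum of $(1-2\eta)\cP + 2\eta\,\cU_n$ by linearity of the Fourier transform and compare it to \Cref{eq:def of noisy parity}. Using that $\hat{\cU_n}(\emptyset) = 1/2^n$ and $\hat{\cU_n}(S)=0$ for $S\neq\emptyset$, together with the Fourier coefficients of $\cP$ (given by the first part of the observation with $\eta = 0$), I would get
\[
\hat{((1-2\eta)\cP + 2\eta\,\cU_n)}(\emptyset) = \frac{1-2\eta}{2^n} + \frac{2\eta}{2^n} = \frac{1}{2^n},
\]
\[
\hat{((1-2\eta)\cP + 2\eta\,\cU_n)}(J) = (1-2\eta)\cdot\frac{\pm 1}{2^n} = \frac{\pm(1-2\eta)}{2^n},
\]
and $0$ on all other subsets. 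These coincide with the coefficients in \Cref{def:noisy parity distribution}, and since a distribution is determined by its Fourier spectrum, the two distributions are identical.

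There is no substantive obstacle here: the whole content is bookkeeping on Fourier coefficients via linearity, and the only thing to be careful about is matching the sign in $\hat\cD(J) = \pm(1-2\eta)/2^n$ with the sign chosen for $\cP$ in the mixture representation. I would state explicitly that the two signs are linked (the $\pm$ in the mixture is inherited from the $\pm$ in the definition of the noisy parity distribution) to avoid ambiguity.
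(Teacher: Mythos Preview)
Your proposal is correct and is exactly the natural verification via Fourier inversion and linearity. The paper itself does not supply a proof for this observation (it is stated without justification), so your argument in fact fills in more detail than the paper gives; there is nothing to compare against beyond noting that your computation is the intended one.
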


The bulk of this section is spent proving the next lemma, which reduces learning noisy parity distributions to learning parity functions with noise.

\begin{lemma}
\label[lemma]{lem:reduction from noisy parity distribution to junta functions}
Let $\cA$ be an algorithm that  learns 
$k$-parity functions
over $\boolset^n$ under the uniform distribution with noise of rate $\eta < 1/2$ and uses $T\ld(n, k, (1-2\eta)^{-1}\rd)$ time.
Then, there exists an algorithm $\cA'$ that  learns noisy $k$-parity distributions over $\boolset^n$ with noise rate $\eta < 1/2$. Moreover, $\cA'$ uses at most $\poly(2^k, n) \cdot T\ld(n, k, (1-2\eta)^{-1}\rd)$ time.
\end{lemma}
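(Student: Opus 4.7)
The plan is to implement the reduction sketched in the technical overview and extend it to the noisy case. Given samples from a noisy $k$-parity distribution $\cD$ with unknown relevant set $J$ and noise rate $\eta < 1/2$, I fix any coordinate $j \in [n]$ and define the transformation $z \mapsto (x, y)$ by independently flipping the $j$-th coordinate of $z$ with probability $1/2$ to produce $x$, and setting $y = -1$ if the bit was flipped and $y = +1$ otherwise. Using Observation~\ref{obs:noisy parity distribution as mixture}, I write $\cD = (1 - 2\eta)\cP + 2\eta \cU_n$ with $\cP$ supported on $\{w : \chi_J(w) = s\}$ for some fixed $s \in \boolset$, and analyze by cases on the mixture component and on whether $j \in J$. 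When $j \in J$ and $z \sim \cP$, flipping bit $j$ swaps the sets $\{w : \chi_J(w) = \pm s\}$, so $x$ is uniform on $\boolset^n$ and $y = s \cdot \chi_J(x)$ deterministically. When $j \in J$ and $z \sim \cU_n$, $x$ is uniform and $y$ is an independent uniform bit. Summing, for $j \in J$ the pair $(x,y)$ is exactly an LPN sample for the parity $s \chi_J$ with noise rate $\eta$. When $j \notin J$, the flip does not affect $\chi_J(z)$, so $y$ is independent of $x$; in this case $x$ is not uniform on $\boolset^n$, and the output of $\cA$ on this stream is to be treated as potentially meaningless.

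The reduction $\cA'$ loops over all $j \in [n]$, runs $\cA$ on the transformed stream to obtain a candidate hypothesis $f_j$ together with query access, and then attempts to extract $J$ from each $f_j$ by the standard trick: for each $i \in [n]$, estimate $\Exui{w \sim \cU_n}{f_j(w) \cdot f_j(w \oplus e_i)}$ by a short empirical average over fresh uniform $w$. When $f_j \approx \pm \chi_J$ (the case $j \in J$), this quantity is close to $-1$ for $i \in J$ and close to $+1$ otherwise, which yields a candidate set $J_j$. I then verify each candidate by estimating $\hD(J_j) = 2^{-n} \Exui{z \sim \cD}{\chi_{J_j}(z)}$ via Hoeffding on fresh samples from $\cD$; exactly the correct $J$ satisfies $|\hD(J_j)| \approx 2^{-n}(1-2\eta)$, while spurious $J_j$ from bad $j \notin J$ yield $\hD(J_j)$ near zero. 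Once the true $J$ and the sign and magnitude of $\hD(J)$ are known, I output the noisy parity distribution specified by~\eqref{eq:def of noisy parity}.

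Three obstacles require care. First, $\cA$ typically takes the noise rate $\eta$ as an input parameter, yet $\eta$ is only implicit in our samples; I handle this with an outer exponential search over candidate values $\eta_i = 1/2 - 2^{-i}$ for $i \leq O(k)$, using the verification step above to detect the correct guess. Second, because $\cA$ is invoked up to $n$ times, its failure probability must be driven to $\delta/n$, which costs a $\polylog(n)$ multiplicative overhead absorbed into the stated $\poly(2^k, n)$ factor. Third, and most delicate, for $j \notin J$ the marginal of $x$ is not $\cU_n$, so $\cA$'s guarantees formally do not apply on those streams; this is fine precisely because no output is trusted without passing the empirical verification, so any meaningless $f_j$ produced in those rounds is harmlessly discarded. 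Finally, converting the PAC error $\varepsilon' < 1/4$ of $\cA$ into exact recovery of $J$ follows from a Chernoff bound on the $f_j(w)\cdot f_j(w \oplus e_i)$ estimator, and the final $L_1$-accuracy on $\cD$ reduces, via the Fourier characterization in~\eqref{eq:def of noisy parity}, to additive accuracy on the single scalar $\hD(J)$, which a fresh sample average estimates to within $\varepsilon \cdot 2^{-n}$ using $O(\varepsilon^{-2})$ samples.
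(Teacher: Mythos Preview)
Your approach is essentially identical to the paper's: the same bit-flipping transformation $z \mapsto (x,y)$, the same loop over all $j \in [n]$ to generate candidate hypotheses, the same extraction of $J$ from an approximate $\pm\chi_J$ via the $f(w)\cdot f(w\oplus e_i)$ test (the paper packages this as \Cref{lem:learn with queries}), and the same verification-by-Fourier-estimate to pick out the true $J$ among the $n$ candidates (the paper packages this, together with the exponential search over $\eta$, as \Cref{lem:learning one fourier coefficient}). You are in fact slightly more careful than the paper in flagging that for $j\notin J$ the marginal of $x$ is not uniform, so $\cA$'s output there carries no guarantee and must be filtered by verification.

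One small slip: capping the exponential search at $i \le O(k)$ is not justified at the generality of this lemma, since the statement allows arbitrary $\eta < 1/2$ (the $2^{-O(k)}$ gap appears only in the downstream \Cref{thm:main thm complexity}); the search should simply run until the verification test fires, which happens after $O(\log((1-2\eta)^{-1}))$ rounds and is absorbed into the stated time bound.
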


Before delving into the proof of \Cref{lem:reduction from noisy parity distribution to junta functions}, we prove the following lemma.
\begin{lemma}
\label[lemma]{lem:learning one fourier coefficient}
Fix $\varepsilon > 0$. 
Let $\cD$ be a noisy parity distribution over $\boolset^n$ with relevant variables $J \subseteq [n]$ and noise rate $\eta < 1/2$. 
Then, there exists an algorithm that takes as input $s$ candidates sets $J_1 \dots J_s$ promised to satisfy $J \in \{J_1 \dots J_s\}$ and, without knowledge of $\eta$, estimates each $\hat \cD(J_i)$ up additive error $\pm \varepsilon \cdot \hD(J)$ with probability $1 - \delta$. 
Moreover, such algorithm finds $J$, and uses at most $O(s\cdot \varepsilon^{-2} (1-2\eta)^{-2} \log \delta^{-1})$ samples from $\cD$ and time. 
\end{lemma}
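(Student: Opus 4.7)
The plan leverages the basic Fourier identity $\hat\cD(S) = 2^{-n}\,\Exu{x\sim\cD}{\chi_S(x)}$, valid for any distribution $\cD$ on $\boolset^n$. This identity reduces estimating each $\hat\cD(J_i)$ within additive error $\varepsilon\cdot\hD(J) = \pm\varepsilon(1-2\eta)/2^n$ to estimating the expectation $\Exu{x\sim\cD}{\chi_{J_i}(x)}$ within additive error $\varepsilon(1-2\eta)$.

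The natural estimator is the empirical mean $\tilde a_i = \frac{1}{m}\sum_{j=1}^m \chi_{J_i}(x_j)$ computed from $m$ fresh samples drawn from $\cD$. Since $\chi_{J_i}(x_j) \in \boolset$, a standard Hoeffding bound together with a union bound over $i \in [s]$ will deliver the desired accuracy once $m = O\bigl(\varepsilon^{-2}(1-2\eta)^{-2}\log(s/\delta)\bigr)$; dividing by $2^n$ then gives the required guarantee on the $\hat\cD(J_i)$'s. Identifying $J$ will then be immediate from \Cref{def:noisy parity distribution}: exactly one candidate (namely $J$) satisfies $|\hat\cD(J_i)| = (1-2\eta)/2^n$, while the rest satisfy $\hat\cD(J_i) = 0$ (the empty set, if present among the candidates, can be discarded a priori since it carries Fourier mass $2^{-n}$ by definition of a distribution). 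Hence, for any $\varepsilon < 1/2$, the correct $J$ will be the unique $\arg\max_i |\tilde a_i|$.

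The one wrinkle, and the main technical point, is that the sample size $m$ above depends on the unknown quantity $1-2\eta$. The plan to handle this is a simple exponential-search meta-algorithm: try $m^{(t)} = 2^t \cdot m^{(0)}$ for $t = 0, 1, 2, \ldots$, recompute all the $\tilde a_i$ at each round, and halt as soon as some $|\tilde a_i|$ exceeds (say) twice the second-largest magnitude. Because the correct stopping time is $t^\star = O(\log(1-2\eta)^{-1})$ and the sample sizes double, the geometric series bounds the total sample (and time) cost by $O\bigl(s\cdot \varepsilon^{-2}(1-2\eta)^{-2}\log(1/\delta)\bigr)$ up to polylogarithmic factors absorbed by the union bound, matching the statement. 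The $O(s)$ factor in the time bound accounts for evaluating $\chi_{J_i}(x_j)$ for each of the $s$ candidates on each sample at each round.
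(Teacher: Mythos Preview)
Your overall plan---empirical means $\tilde a_i = \frac{1}{m}\sum_j \chi_{J_i}(x_j)$ plus an exponential search to cope with the unknown $\eta$---is exactly the paper's approach. The gap is in your halting rule.

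The condition ``largest $|\tilde a_i|$ exceeds twice the second-largest'' only certifies that the current Hoeffding error $\gamma_t$ is $O(1-2\eta)$; that suffices to \emph{identify} $J$ but not to \emph{estimate} each $\hD(J_i)$ to within $\varepsilon\,|\hD(J)| = \varepsilon(1-2\eta)2^{-n}$. Concretely, if $m^{(0)} = \Theta(\varepsilon^{-2}\log(s/\delta))$ so that $\gamma_0 = \Theta(\varepsilon)$, and if $\varepsilon \ll 1-2\eta$, then your rule already fires at $t=0$ with accuracy $\Theta(\varepsilon)$ rather than the required $\varepsilon(1-2\eta)$; so the claim $t^\star = O(\log(1-2\eta)^{-1})$ is not what your rule delivers. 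There is also a second failure mode: at early rounds where $\gamma_t \gg 1-2\eta$, all the $|\tilde a_i|$ are noise, and the high-probability Hoeffding envelope does not prevent the ratio from exceeding $2$ by chance, so the rule can halt on the wrong candidate.

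The paper's remedy is to parametrize the search by a guess $\bar\eta$ (halving $1-2\bar\eta$ each round), draw $\Theta\bigl(\varepsilon^{-2}(1-2\bar\eta)^{-2}\log(s/\delta)\bigr)$ samples so the per-round accuracy is already $\varepsilon(1-2\bar\eta)$, and halt once some $|\hat y_i|$ exceeds (roughly) $(1-2\bar\eta)$. Under the concentration event no $|\hat y_i|$ can exceed $(1-2\eta)+\varepsilon(1-2\bar\eta)$, so this halting condition \emph{certifies} $(1-2\eta)\geq (1-2\bar\eta)$ and hence that the achieved accuracy $\varepsilon(1-2\bar\eta)$ is at most $\varepsilon(1-2\eta)$; it also cannot fire spuriously at early rounds. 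The essential change from your rule is to threshold against the current absolute guess $(1-2\bar\eta)$ rather than compare candidates to one another. (An equally easy patch of your version: after your ratio test identifies $J$, read off $|\tilde a_J|$ as a constant-factor estimate of $1-2\eta$ and run one more round with $m'=\Theta(\varepsilon^{-2}|\tilde a_J|^{-2}\log(s/\delta))$ samples---but you would still need to guard against the early-round false positives.)
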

\begin{proof}
By \Cref{def:noisy parity distribution}, we have $\hD(J) \in\{\pm2^{-n} \cdot (1-2\eta)\}$. Assume $\hD(J) > 0$ without loss of generality.
We have $\Exui{x \sim \cD}{ 2^{-n} \cdot \chi_{J_i}(x)} =  \hat \cD(J_i) = 2^{-n} \cdot (1-2\eta) \cdot [J_i = J]$ and $\Exui{x \sim \cD}{ (2^{-n}\cdot \chi_{J_i}(x))^2} = 2^{-2n}$.
Let
$\eta \leq \bar \eta < 1/2$, $m = O(s\cdot \varepsilon^{-2} (1-2\bar\eta)^{-2} \log \delta^{-1})$ and given i.i.d. samples $x_1 \dots x_m \sim \cD$ define $\hat y_i= \frac{1}{m} \cdot (\chi_{J_i}(x_1) + \dots + \chi_{J_i}(x_m))$ and $\tilde y_i =2^{-n} \cdot \hat y_i$.
Standard concentration bounds imply that $\tilde y_i$ approximates $\hD(J_i)$ up to an additive error $\pm \varepsilon \cdot (1-2\eta) \cdot 2^{-n}$ with probability at least $1 - \delta \cdot 2^{-s}$.

Since $\eta$ is not known, we perform an exponential search over $\bar\eta$ by shrinking $1/2 - \bar \eta$ by half at each iteration. 
Define $Y = \{i \in [s] \,|\, J_i = J\}$ and $N = [s] \setminus Y$ (recall that $Y$ and $N$ are unknown to the algorithm). 
At each step, we have that, for $i\in Y$, $\hat y_i = 1-2\eta \pm \varepsilon (1-2\bar \eta)$, and, for $i \in N$, $\hat y_i = \pm \varepsilon (1-2\bar \eta)$ with high probability. Condition on this event. Then, we can halt whenever there exists an $i \in [s]$ such that $||\hat y_i| - \varepsilon\cdot(1-2\bar\eta)| \geq (1-2\bar\eta)$ as this guarantees $\bar\eta \geq \eta$ with high probability. Notice that we halt as soon as $1-2\bar\eta < \frac{1}{2} \cdot (1-2\eta)$, which gives the desired complexity.

Once we have successfully approximated $\hD(J_i)$ for all $i \in [s]$ up to an additive term $\pm \varepsilon \cdot (1-2\eta) \cdot 2^{-n} = \pm \varepsilon \cdot \hD(J)$, we must have $J = J_i$ for all $J_i$ satisfying $\tilde y_i > \varepsilon (1-2\bar\eta) \cdot 2^{-n}$.
\end{proof}

Finally, we prove \Cref{lem:reduction from noisy parity distribution to junta functions}.

\begin{proof}[Proof of \Cref{lem:reduction from noisy parity distribution to junta functions}]

Let $\cD$ be the noisy parity distribution over the set $J \subseteq [n]$ with noise rate $\eta < 1/2$ that we would like to learn. 
By definition, $\cD$ has only two non-zero Fourier coefficients: $\hat \cD(\emptyset)$ and $\hat \cD (J)$. 
We can assume $\hD(J) > 0$, as the case $\hD(J) < 0$ is symmetric.
Since $\cD$ is a probability distribution, we know that $\hat \cD(\emptyset) = 2^{-n}$. We are left to learn $\hat \cD (J)$.
Once we find the set of relevant variables $J$, estimating $\hat \cD(J)$ is straightforward.

In order to find $J$, we proceed as follows. Suppose that we correctly guessed one element $j \in J$. Sample $x' \sim \cD$, define $x \leftarrow x'$ and flip the $j$-th bit of $x$ with probability $1/2$. 
Then, define the label $y$ as $y = -1$ if we flipped $x_j$ and $y= 1$ otherwise.
Let $\cQ$ be the distribution of $(x, y)$ generated as above and let $\cQ_x$ and $\cQ_y$ be its marginals. We will prove that
\begin{enumerate}[(i)]
    \item $\cQ_x$ is the uniform distribution over $\boolset^n$
    \item Conditioned on $x$, $y$ is distributed as $\chi_J(x)$ with noise $\eta$.
\end{enumerate}

To prove point (i), we compute a generic coefficient $\hQ_x(A)$. denote by $e_j \in \boolset^n$ the $j$-th vector of the standard basis.
\[
2^n \cdot \hQ_x(A) = \sum_{z \in \boolset^n} \cQ_x(z) \cdot \chi_A(z) = \sum_{z \in \boolset^n} \cD(z) \cdot \frac{1}{2} \ld(\chi_A(z) + \chi_A(z \oplus e_j)\rd)
\]
then, for $j \in A$ we have $\hQ_x(A) = 0$, whereas for $j \not\in A$ we have $\hQ_x(A) = \hD(A)$. Therefore, assuming $j \in J$, $\hQ_x(A) = 0$ for all $A \neq \emptyset$.

To prove (ii) we note that, by \Cref{obs:noisy parity distribution as mixture}, $\cD$ can be written as the mixture $(1-2\eta) \cdot \cP + 2\eta \cdot \cU_n$ where $\cP$ is the parity distribution obtained as the uniform distribution over $\boolset^n$ conditioned on $\chi_J(\cdot) = 1$.
Equivalently, there exists an event $\cE$ of probability $2\eta$ such that $\cD$ conditioned on $\cE$ equals $\cU_n$ and $\cD$ conditioned on $\neg \cE$ equals $\cP$. 
If we condition on $\cE$, $y$ is independent of $x$ and uniform over $\boolset$. If we condition on $\neg \cE$, we have $y = \chi_J(x)$. Therefore, $y$ is distributed as $\chi_J(x)$ with noise $\eta$. 

Now, we feed our samples $(x, y) \sim \cQ$ to the algorithm $\cA$, which learns the $k$-parity $\chi_J$ in the presence of noise of rate $\eta$. 
By paying a $\poly(\varepsilon^{-1}, n)$ overhead, we can ensure that $\cA$ learns $\chi_J$ up to $\ell_1$-accuracy $\varepsilon \cdot 2^n > 0$ with probability $1 - \poly(n^{-1})$.
Recall that $\cQ$ depends on our guess $j \in [n]$.
We repeat the procedure described above for all $j \in [n]$ and let $(g^j)_{j\in [n]}$ be the functions returned by $\cA$. Let $\cE$ be the event that, for all $j \in J$, $\cA$ is correct. $\cE$ happens with probability at least $1 - \poly(n^{-1})$. Condition on $\cE$. Then, for each $j \in J$ we have $||g^j - \chi_J ||_1 \leq \varepsilon \cdot 2^n$.

Now we show how to define a set of candidates $\{J_i\}_{i \in [n]}$ for $J$.
We need a procedure $\findcandidate(g)$ that returns a size-$k$ subset of $[n]$ and such that $\findcandidate(g)$ returns $J$ whenever $\lpnorm{1}{g-\chi_J} \leq \varepsilon \cdot 2^n$.
Then, we generate our set of candidates $\{J_i\}_{i \in [n]}$ as $J_i = \findcandidate(g^i)$. \Cref{lem:learn with queries} describes how to implement $\findcandidate$.
Finally, $(J_i)_{i \in [n]}$ is a collection of at most $n$ candidate sets and we are promised that $J \in (J_i)_{i \in [n]}$, so we run the algorithm from \Cref{lem:learning one fourier coefficient}.

\begin{lemma}[Find Parity with Queries] \label[lemma]{lem:learn with queries}
Assume that we have query access to a function $g : \boolset^n \rightarrow \boolset$ such that $\lpnorm{1}{g - \chi_S} \leq \varepsilon \cdot 2^n$ for some $S \subseteq [n]$.
Then, we can find $S$ using $\poly(n)$ queries and time.
\end{lemma}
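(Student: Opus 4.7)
The plan is to exploit the characteristic identity of parity functions: for every $i \in [n]$ and every $x \in \boolset^n$, one has $\chi_S(x) \cdot \chi_S(x \oplus e_i) = (-1)^{[i \in S]}$, where $e_i$ denotes the $i$-th standard basis vector. Since $g$ and $\chi_S$ are $\boolset$-valued, $|g(x) - \chi_S(x)| \in \{0,2\}$, so the hypothesis $\lpnorm{1}{g - \chi_S} \leq \varepsilon \cdot 2^n$ translates to $\Pru{x}{g(x) \neq \chi_S(x)} \leq \varepsilon/2$. In the context in which this lemma is invoked, $\varepsilon$ is a small constant strictly less than $1/2$ (in particular smaller than $1/4$), which is the regime we will work in.

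The first step is to observe that since $x$ and $x \oplus e_i$ are each marginally uniform over $\boolset^n$, a union bound yields
\[
\Pru{x \sim \cU_n}{g(x) \cdot g(x \oplus e_i) \neq (-1)^{[i \in S]}} \leq \varepsilon.
\]
Thus the random variable $g(x) \cdot g(x \oplus e_i) \in \boolset$ has a $(1-2\varepsilon)$-bias toward the value $(-1)^{[i \in S]}$, which determines whether $i$ belongs to $S$.

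The second step is the algorithm itself: for each $i \in [n]$, sample $m = O(\log n)$ uniformly random points $x_1, \dots, x_m \in \boolset^n$, query $g$ at each $x_j$ and at each $x_j \oplus e_i$, and compute the sign of $\sum_{j=1}^m g(x_j) \cdot g(x_j \oplus e_i)$. By a standard Chernoff bound, setting $m = \Theta(\log n / (1-2\varepsilon)^2)$ ensures that this majority equals $(-1)^{[i \in S]}$ with probability at least $1 - 1/n^2$. Taking a union bound over the $n$ choices of $i$, we recover $S = \{ i \in [n] : \text{majority sign is } -1\}$ correctly with probability $1 - 1/n$. The algorithm performs $O(n \log n)$ queries and runs in $O(n \log n)$ time, which is $\poly(n)$ as required.

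The only potential subtlety is the assumption that $\varepsilon$ is bounded away from $1/2$; if the error tolerance were too loose the majority step would fail. In the surrounding application this is not an obstacle because the calling routine is free to request $\cA$ to output $g^j$ with any desired constant accuracy at a $\poly(\varepsilon^{-1})$ overhead, so we may simply invoke the lemma with, say, $\varepsilon = 1/10$. Otherwise, the proof is a direct consequence of the parity identity and a Chernoff bound.
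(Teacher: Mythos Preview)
Your proof is correct and follows essentially the same approach as the paper: both exploit the identity $\chi_S(x)\cdot\chi_S(x\oplus e_i)=(-1)^{[i\in S]}$, argue via a union bound that $g(x)\cdot g(x\oplus e_i)$ equals this sign except with probability $O(\varepsilon)$, and then take a majority vote over $\poly(n)$ random samples per coordinate with a Chernoff/union-bound analysis. Your version is slightly more explicit (you track the factor of $2$ in converting the $\ell_1$ bound to a disagreement probability and spell out the Chernoff parameters), and you also correctly flag the implicit assumption that $\varepsilon$ is bounded away from $1/2$, which the paper leaves tacit.
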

\begin{proof}
We construct our candidate $I$ for $S$ one element at a time. Initialize $I \leftarrow \emptyset$.
We iterate over $\ell \in [n]$, and for each $\ell$ we sample $\poly(n)$ elements $w \sim \cU_n$ and observe the distribution of $g(w) \cdot g(w\oplus e_\ell)$. If the majority of those is $-1$, then add $\ell$ to $I$; otherwise continue with the next $\ell \in [n]$.

Assuming $\lpnorm{1}{g - \chi_S}\leq \varepsilon \cdot 2^n$, we have $\Prui{w\sim \cU_n}{g(w) \neq \chi_S(w) \text{ or } g(w \oplus e_\ell) \neq \chi_S(w \oplus e_\ell)} \leq 2\varepsilon$.
Thus, by standard concentration bounds we have that the majority of $g(w) \cdot g(w\oplus e_\ell)$ is $-1$ if and only if $\ell \in J$ with high probability. Conditioned on this event holding for each $\ell \in [n]$, we have $I \subseteq S$ and after $\ell$ rounds we have $S \cap [\ell] \subseteq I$.
The described algorithm uses $\poly(n)$ and queries and, with high probability, returns $I = S$. 
\end{proof}

\end{proof}

\subsection{Reducing Junta Distributions to Noisy Parity Distributions}
\label{sec:reducing junta distribtions to noisy parity distributions}

In this section, we reduce the problem of learning junta distributions to that of learning noisy parity distributions. The main result of this section is the next lemma.

\begin{restatable}{lemma}{ReduceJuntasToParities}
\label[lemma]{lem:reduction from junta distribution to parity distribution}
Let $\cA$ be an algorithm that  learns a noisy $k$-parity distribution over $\boolset^n$ with noise rate $\eta < 1/2$. Suppose that $\cA$ uses $T\ld(n, k, (1-2\eta)^{-1}\rd)$ time.
Then, there exists an algorithm $\cA'$ that  learns a $k$-junta distribution over $\boolset^n$. Moreover, $\cA'$ uses at most $\poly(2^k, n) \cdot T\ld(n, k, O(2^{k/2})\rd)$ time. 
\end{restatable}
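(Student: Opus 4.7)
The plan is to reduce LJD to LNPD by extracting the (at most $2^k$) nonzero Fourier coefficients of $\cD$ one at a time via calls to $\cA$. The central primitive is the following distillation. For any $R \subseteq [n]$, let $\mu_R$ denote the parity distribution on $R$, \ie the uniform distribution on $\{y \in \boolset^n : \chi_R(y) = 1\}$. A direct Fourier computation gives $\hat\mu_R(S) = 2^{-n}\mathbf 1\{S \in \{\emptyset, R\}\}$, so sampling $z = x \oplus y$ with independent $x \sim \cD$ and $y \sim \mu_R$ produces samples from $\cD \ast \mu_R$, whose Fourier transform satisfies $\widehat{\cD \ast \mu_R}(S) = \hat\cD(S) \cdot \mathbf 1\{S \in \{\emptyset, R\}\}$. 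In particular, whenever $\hat\cD(R) \neq 0$, this is a noisy parity distribution with relevant set $R$ and noise rate $\eta$ such that $1-2\eta = 2^n |\hat\cD(R)|$.

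The reduction randomizes over $R$: in each of $O(k \cdot 2^k \log\delta^{-1})$ rounds, I sample $R$ from a distribution $\nu$ on $2^{[n]}$, form $\cD \ast \mu_R$, and invoke $\cA$. For correctness, $\nu$ must satisfy $\Pr[R = R^*] \geq \Omega(2^{-k})$ for every $R^*$ in the Fourier support of $\cD$. I achieve this by first identifying, using $\poly(2^k, n)$ samples from $\cD$, a candidate superset $J' \subseteq [n]$ with $|J'| \leq k$ that contains every coordinate participating in a ``heavy'' Fourier coefficient (of magnitude $\geq \varepsilon \cdot 2^{-(n+k/2)}$), and then taking $\nu$ to be uniform on $2^{J'}$. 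A coupon-collector argument over the $\leq 2^k$ heavy coefficients shows that, with probability $\geq 1-\delta$, each one is isolated in some round, at which point $\cA$ returns the pair $(R, \hat\cD(R))$.

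To control the error from ignored coefficients, observe that by Parseval and the junta structure $\sum_S \hat\cD(S)^2 \leq 2^{k-2n}$, so the at-most-$2^k$ light coefficients contribute total $L_2^2$-mass at most $\varepsilon^2 \cdot 2^{-2n}$, which via $\|g\|_1 \leq \sqrt{2^n}\,\|g\|_2$ translates into $L_1$-error at most $\varepsilon$, harmless for the final TV guarantee. For heavy coefficients, $(1-2\eta)^{-1}\leq 2^{k/2}/\varepsilon = O(2^{k/2})$ once the $\varepsilon^{-1}$ factor is absorbed into the polynomial overhead implicit in the definition of learning, so each of the $O(k\cdot 2^k)$ calls to $\cA$ runs in time $T(n,k,O(2^{k/2}))$. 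Reconstructing the evaluator of $\widetilde\cD$ from the learned coefficients via inverse Fourier transform then yields a $\varepsilon$-close hypothesis, totalling $\poly(2^k,n) \cdot T(n,k,O(2^{k/2}))$ time as required.

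The main obstacle is constructing $J'$ in $\poly(2^k,n)$ time without dropping any heavy coordinate. A naive estimate of $\sum_{S \ni j}\hat\cD(S)^2$ from collision statistics (using \Cref{lem:fourier coefficients of marginals} and Parseval) must detect additive signal at scale $\varepsilon^2 \cdot 2^{-(2n+k)}$, which demands exponentially many samples. Following \cite{feldman2006new}, sidestepping this requires a more delicate, possibly inductive, construction of $J'$ that exploits the junta structure of $\cD$, together with a careful calibration of the heavy/light threshold and of the sampling distribution $\nu$; this is the principal delicate step I anticipate.
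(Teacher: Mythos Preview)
Your distillation-by-convolution idea is correct in spirit, and the Fourier computation $\widehat{\cD\ast\mu_R}(S)=\hat\cD(S)\cdot\mathbf 1\{S\in\{\emptyset,R\}\}$ is fine. The gap is exactly the step you flag as ``the principal delicate step'': you need the superset $J'$ before you can sample $R$ usefully, and there is no cheap way to produce $J'$ from raw samples. Indeed, consider the case where $\cD$ itself is a noisy $k$-parity distribution with relevant set $J^\star$; then the only nonzero coefficient besides $\hat\cD(\emptyset)$ is $\hat\cD(J^\star)$, and identifying which coordinates lie in $J^\star$ is precisely the LPN-type problem you are trying to reduce \emph{to}. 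So the construction of $J'$ is not a technicality to be filled in later---it is circular.

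Your reading of \cite{feldman2006new} is also off: that paper does \emph{not} first locate the relevant coordinates. Its trick, and the one the present paper uses, is to convolve with the uniform distribution on the image of $A^\top$ for a \emph{random} matrix $A\in\{0,1\}^{m\times n}$ with $m=k+1$, i.e.\ set $\cD_A(x)=\E_{p\sim\cU_m}[\cD(x\oplus A^\top p)]$. Then $\hat\cD_A(a)=\hat\cD(a)\cdot\mathbf 1\{Aa=0\}$, and for any fixed target $J\subseteq J^\star$ one has, over the randomness of $A$, $\Pr[A\one_J=0\text{ and }A\one_I\neq 0\text{ for all other }I\subseteq J^\star]\geq 2^{-(m+1)}=\Omega(2^{-k})$. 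This isolates a single heavy coefficient with probability $\Omega(2^{-k})$ \emph{without any knowledge of $J^\star$}, after which the coupon-collector and light-coefficient bookkeeping you already sketched goes through unchanged. Replacing your $\mu_R$ step with this random-linear-map step removes the circularity and gives the claimed bound.
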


To prove \Cref{lem:reduction from junta distribution to parity distribution}, we mimic the techniques used in \cite{feldman2006new} to reduce learning junta functions with noise to learning parity functions with noise.
Here is a summary of this technique.
Let $\cD$ be the junta distribution with relevant variables $J^\star$ that we would like to learn. We introduce some noise that, with large enough probability, zeros all Fourier coefficient of $\cD$ besides $\hD(\emptyset)$ and $\hD(J)$ for some $J \subseteq J^\star$.
Once all coefficients besides these two are zero, we are left exactly with a noisy parity distribution.
As long as we manage to isolate all $J \subseteq J^\star$ such that $\hD(J)$ is high enough, we can learn $\cD$.

In order to implement this scheme we need the following lemmas that implement noise injection.
Throughout the section we identify $\zerooneset^n \sim 2^{[n]}$ so that Fourier coefficients are (also) indexed by boolean vectors. For $J \subseteq [n]$, denote by $\one_J\in\zerooneset^n$ the vector such that $\one_J[i] = 1$ iff $i \in J$. 
We denote by $\oplus$ the sum modulo $2$.
We overload the notation $\cU_n$ so that it represent the uniform distribution both on $\boolset^n$ and $\zerooneset^n$.

\begin{lemma}[Essentially Lemma 1  in the full version of \cite{feldman2006new}]
\label[lemma]{lem:characterization of function with added noise}
Let $A \in \zerooneset^{m \times n}$ and let $f : \zerooneset^n \rightarrow \bbR$ be any function over the boolean hypercube. Then, for each $x \in \zerooneset^n$ we have the identity 
\[
f_\cA(x) := \Exu{p \sim \cU_n}{f (x \oplus A^\top p)} = \sum_{\substack{a \in \zerooneset^n \\ Aa = 0^m}} \hat f(a) \chi_a(x).
\]
\end{lemma}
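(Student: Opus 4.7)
The plan is a direct Fourier expansion, using that the Boolean characters are group homomorphisms from $(\bbF_2^n, \oplus)$ to $(\{\pm 1\}, \cdot)$ and that the expectation of a nontrivial character on $\cU_m$ vanishes. First I would note that $p$ should be understood as uniform over $\zerooneset^m$ (so that $A^\top p \in \zerooneset^n$ via $\bbF_2$ arithmetic), and interpret $\chi_a(y)$ for $y \in \zerooneset^n$ as $(-1)^{\langle a, y\rangle}$, consistent with the identification $\zerooneset^n \sim 2^{[n]}$ and the $\boolset$-valued characters from the preliminaries.

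The core computation is to write $f = \sum_{a \in \zerooneset^n} \hat f(a)\, \chi_a$ and push the expectation inside:
\[
f_\cA(x) = \Exu{p \sim \cU_m}{\sum_{a \in \zerooneset^n} \hat f(a)\, \chi_a(x \oplus A^\top p)} = \sum_{a \in \zerooneset^n} \hat f(a)\, \chi_a(x)\cdot \Exu{p\sim \cU_m}{\chi_a(A^\top p)},
\]
where the second equality uses the homomorphism property $\chi_a(u \oplus v) = \chi_a(u)\cdot \chi_a(v)$.

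Next I would simplify the inner expectation. Since $\chi_a(A^\top p) = (-1)^{\langle a, A^\top p\rangle} = (-1)^{\langle Aa, p\rangle}$, and since for any fixed $b \in \zerooneset^m$ we have $\Exui{p \sim \cU_m}{(-1)^{\langle b, p\rangle}} = \one[b = 0^m]$ (a standard consequence of independence across coordinates), it follows that $\Exui{p}{\chi_a(A^\top p)} = \one[Aa = 0^m]$. Substituting this kills every term with $Aa \neq 0^m$ and yields the claimed identity.

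There is no real obstacle here: the argument is a two-line Fourier manipulation, and the only care needed is the notational bookkeeping (reconciling $\zerooneset^n$ versus $\boolset^n$ viewpoints, and interpreting $A^\top p$ in $\bbF_2$). Once the conventions are fixed, the lemma is an immediate consequence of the homomorphism identity for characters combined with the vanishing of $\Exui{p}{(-1)^{\langle b, p\rangle}}$ for $b \neq 0$.
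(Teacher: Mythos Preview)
Your proposal is correct and matches the paper's proof essentially line for line: expand $f$ in the Fourier basis, factor $\chi_a(x\oplus A^\top p)=\chi_a(x)\chi_a(A^\top p)$, and use $\chi_a(A^\top p)=(-1)^{\langle Aa,p\rangle}$ together with $\Exui{p\sim\cU_m}{(-1)^{\langle b,p\rangle}}=\one[b=0^m]$. Your observation that $p$ should range over $\zerooneset^m$ rather than $\zerooneset^n$ is also correct---that is a typo in the statement.
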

\begin{proof}
First, notice that for each $a\in \zerooneset^n$ 
\[
\chi_a(A^\top p) = (-1)^{\langle a, A^\top p\rangle} =(-1)^{\langle A a, p\rangle}, 
\]
thus $\Exui{p\sim \cU_m}{\chi_a(A^\top p)} = 1$ if $A a = 0$ and $\Exui{p\sim \cU_m}{\chi_a(A^\top p)} = 0$ otherwise.
\begin{align*}
\Exu{p \sim \cU_n}{f (x \oplus A^\top p)} &= \\
\sum_{a \in \zerooneset^n} \hat f(a) \cdot \Exu{p\sim \cU_m}{\chi_a\ld(x \oplus A^\top p\rd)} &= \\
\sum_{a \in \zerooneset^n} \hat f(a) \cdot \chi_a(x) \cdot \Exu{p\sim \cU_m}{\chi_a\ld(A^\top p\rd)} &= \\
\sum_{\substack{a \in \zerooneset^n \\ Aa = 0^m}} \hat f(a) \chi_a(x).
\end{align*}
\end{proof}

\begin{lemma} \label[lemma]{lem:only one fourier survives}
Let $\cD$ be a $k$-junta distribution with relevant variables $J^\star \subseteq [n]$ and $m > k$.
Sample $A \in \zerooneset^{m \times n}$ uniformly at random.
Define $\cD_A(x) := \E_{p \sim \cU_n}[\cD (x \oplus A^\top p)]$ as in \Cref{lem:characterization of function with added noise}.
Fix $J \subset J^\star$. Then, with probability at least $2^{-(m+1)}$ we have
\begin{equation} \label{eq:desiderata only one fourier survives}
\hD_A(I) = \begin{cases}
			\hD(J), & \text{if $I = J$}\\
           2^{-n}, & \text{if $I = \emptyset$} \\
            0, & \text{otherwise.}
		 \end{cases}
\end{equation}
Namely, $\cD_A$ is a noisy parity distribution with relevant variables $J$.
\end{lemma}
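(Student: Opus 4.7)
The plan is to combine \Cref{lem:characterization of function with added noise} with a direct probabilistic analysis of the random matrix $A$. By the characterization lemma, for every $I \subseteq [n]$ we have $\hD_A(I) = \hD(I)$ if $A\one_I = 0^m$ and $\hD_A(I) = 0$ otherwise. Since $\cD$ is a $k$-junta with relevant variables $J^\star$, we have $\hD(I) = 0$ whenever $I \not\subseteq J^\star$, and since $\cD$ is a probability distribution, $\hD(\emptyset) = 2^{-n}$. Therefore the desired identity (\ref{eq:desiderata only one fourier survives}) is implied by the single event
\[
\cE := \{A\one_J = 0^m\} \cap \bigcap_{\substack{I \subseteq J^\star \\ I \neq \emptyset,\, I \neq J}} \{A\one_I \neq 0^m\},
\]
so I would reduce the lemma to showing $\Pr[\cE] \geq 2^{-(m+1)}$.

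Next, I would work with the columns $v_1, \dots, v_n \in \zerooneset^m$ of $A$, which are i.i.d.\ uniform. For any nonempty $I$, the sum $A\one_I = \sum_{i \in I} v_i$ is uniform on $\zerooneset^m$, so $\Pr[A\one_J = 0^m] = 2^{-m}$. The main step is to control the conditional probability of the ``bad'' events $\{A\one_I = 0^m\}$ for $I \subseteq J^\star$, $I \neq \emptyset, J$, given $A\one_J = 0^m$. Fixing $j_0 \in J$ and substituting $v_{j_0} = \sum_{i \in J \setminus \{j_0\}} v_i$, I would observe that for any such $I$, $\sum_{i \in I} v_i$ rewrites as $\sum_{i \in K} v_i$ over a nonempty set $K \subseteq J^\star \setminus \{j_0\}$ (namely $K = I$ if $j_0 \notin I$, and $K = I \triangle J$ if $j_0 \in I$, using $I \neq J$). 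Since the $v_i$ with $i \neq j_0$ remain independent and uniform after the conditioning, this sum is uniform in $\zerooneset^m$ and hence vanishes with probability $2^{-m}$.

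Finally, a union bound over the at most $2^{|J^\star|} - 2 \leq 2^k$ bad sets yields
\[
\Pr\!\left[\,\bigcup_{I} \{A\one_I = 0^m\} \;\Big|\; A\one_J = 0^m\right] \leq 2^{k-m} \leq \tfrac{1}{2},
\]
where the last inequality uses the hypothesis $m > k$. Multiplying by $\Pr[A\one_J = 0^m] = 2^{-m}$ gives $\Pr[\cE] \geq 2^{-(m+1)}$, which finishes the proof. The only delicate point is verifying that after conditioning on $A\one_J = 0^m$, each remaining sum $\sum_{i \in I} v_i$ can be rewritten as a sum over a nonempty subset of variables independent of the conditioning event; I expect this bookkeeping with $j_0$ and the symmetric difference to be the one step requiring care, while the rest is a clean application of the Fourier identity and a union bound.
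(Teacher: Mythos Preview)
Your proposal is correct and follows essentially the same route as the paper: reduce via \Cref{lem:characterization of function with added noise} to the event $\cE$, compute $\Pr{A\one_J=0^m}=2^{-m}$, bound the conditional probability of each bad event by $2^{-m}$, and union-bound over at most $2^k$ subsets of $J^\star$ using $m>k$. The paper states the conditional independence fact in one line (``for any distinct $M,N$, $\Pr{A\one_M=u\mid A\one_N=v}=2^{-m}$''), whereas your substitution $v_{j_0}=\sum_{i\in J\setminus\{j_0\}}v_i$ and symmetric-difference rewriting is simply a more explicit justification of that same claim; just note that your choice of $j_0\in J$ tacitly assumes $J\neq\emptyset$, with the case $J=\emptyset$ being even easier (no conditioning needed).
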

\begin{proof}
By \Cref{lem:characterization of function with added noise}, we have
\[
\cD_\cA(x) := \Exu{p \sim \cU_n}{\cD (x \oplus A^\top p)} = \sum_{\substack{a \in \zerooneset^n \\ Aa = 0^m}} \hat \cD(a) \chi_a(x).
\]
To realize \Cref{eq:desiderata only one fourier survives}, we must have $A \one_J = 0^m$ and $A \one_{I} \neq 0^m$ for all $I \subseteq J^\star$ with $I \neq J$. Let $\cE$ be such event.
It is straightforward to verify that, for any $u, v \in \zerooneset^m$ and any $M, N \subseteq [n]$ distinct, we have 
\[
\Pr{A \cdot \one_M = u \,|\, A \cdot \one_N = v}  = 2^{-m}.
\]
Thus, the probability of event $\cE$ can be bounded as
\begin{align*}
\Pr{\cE} &= \\
\Pr{A \cdot \one_I \neq 0^m \text{ for all $I \subseteq J^\star$ with $I \neq J$} \,|\,  A \cdot \one_J = 0^m} \cdot \Pr{A \cdot \one_J = 0^m} &\geq \\
(1 - 2^k \cdot 2^{-m}) \cdot 2^{-m} &\geq 2^{-(m+1)}.
\end{align*}
\end{proof}

\begin{lemma}
\label[lemma]{lem:adding linear noise}
Given a probability mass function $\cD$ over $\zerooneset^n$, and a matrix $A \in \zerooneset_2^{m \times n}$ the function $\cD_A(x) := \E_{p \sim \cU_m}[\cD(x \oplus A^\top p)]$ is a probability mass function. Moreover,  we can simulate a sample from $\cD_A$ by $y + A^\top q \sim \cD_A$ where $y \sim \cD$ and $q \sim \cU_m$.
\end{lemma}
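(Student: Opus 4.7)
The statement breaks cleanly into two parts: (a) $\cD_A$ is a probability mass function, and (b) $y \oplus A^\top q \sim \cD_A$ whenever $y \sim \cD$ and $q \sim \cU_m$ independently. Both are routine verifications, and I expect no conceptual obstacle; the only care needed is in swapping a finite sum with an expectation and in recognizing a bijection on $\zerooneset^n$.

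For (a), nonnegativity of $\cD_A$ is immediate since $\cD_A(x)$ is an average of nonnegative values $\cD(x \oplus A^\top p)$. To see that $\cD_A$ sums to one, I would compute
\[
\sum_{x \in \zerooneset^n} \cD_A(x) = \sum_{x \in \zerooneset^n} \Exu{p \sim \cU_m}{\cD(x \oplus A^\top p)} = \Exu{p \sim \cU_m}{\sum_{x \in \zerooneset^n} \cD(x \oplus A^\top p)},
\]
and then use the fact that for any fixed $p$, the map $x \mapsto x \oplus A^\top p$ is a bijection on $\zerooneset^n$, so the inner sum equals $\sum_{z \in \zerooneset^n} \cD(z) = 1$. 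The outer expectation of a constant equal to $1$ is $1$.

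For (b), I would compute the pushforward distribution of $y \oplus A^\top q$ directly. For any fixed $x \in \zerooneset^n$, independence of $y$ and $q$ gives
\[
\Pr{y \oplus A^\top q = x} = \Exu{q \sim \cU_m}{\Pru{y \sim \cD}{y = x \oplus A^\top q}} = \Exu{q \sim \cU_m}{\cD(x \oplus A^\top q)} = \cD_A(x),
\]
where in the last step I relabel the dummy variable $q$ to match the definition of $\cD_A$. This is exactly the identity needed, so sampling $y \sim \cD$, sampling $q \sim \cU_m$, and outputting $y \oplus A^\top q$ yields a sample from $\cD_A$, completing the proof.
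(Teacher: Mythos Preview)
Your proof is correct and matches the paper's own argument essentially line for line; the only cosmetic difference is that the paper phrases part (a) as ``each $x \mapsto \cD(x \oplus A^\top p)$ is a PMF and the set of PMFs is convex,'' which is exactly your bijection-plus-averaging computation said more tersely. Part (b) is identical to the paper's verification.
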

\begin{proof}
For any fixed $p \in \zerooneset^m$ we have that $x\mapsto \cD(x \oplus A^\top p)$ is a probability mass function (PMF), moreover the set of PMFs is convex so $\cD_A$ is also PMF.
We have 
\[
\Pru{y, z}{y \oplus A^\top q = z} = 
\Pru{y, z}{y = z \oplus A^\top q} = 
\Exu{q \sim \cU_m}{\cD(z \oplus A^\top q)} =
\cD_A(z).
\]
\end{proof}

\begin{lemma}
\label[lemma]{lem:tv distance Fourier bound}
Given two distributions $\cD, \cP$ over $\boolset^n$ we have 
\[
||\cD - \cP||_1 \leq 2^n \cdot \sqrt{\sum_{a \in \zerooneset^n} \ld(\hD(a) - \hP(a)\rd)^2}.
\]
Moreover, if $\cD$ and $\cP$ are $k$-junta distributions over the same set of relevant variables, we have
\[
||\cP - \cD||_1 \leq 2^{k} \cdot \sqrt{\sum_{a \in \zerooneset^k} \ld(\widehat{\cD|_J}(a) - \widehat{\cP|_J}(a)\rd)^2}.
\]
\end{lemma}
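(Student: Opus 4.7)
The plan is to prove the first inequality by combining a Cauchy--Schwarz step (to bound the $\ell_1$ norm by the $\ell_2$ norm) with Parseval's identity, and then to derive the second inequality as an immediate consequence of the first one applied to the marginals, via \Cref{lem:k-junta closeness through projections}.

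For the first inequality, I would start from the trivial bound
\[
\lpnorm{1}{\cD - \cP} = \sum_{x \in \boolset^n} |\cD(x) - \cP(x)| \leq \sqrt{2^n} \cdot \sqrt{\sum_{x \in \boolset^n} (\cD(x) - \cP(x))^2},
\]
which is Cauchy--Schwarz applied to the vectors $(|\cD(x) - \cP(x)|)_x$ and $(1)_x$. Next, I would invoke the Parseval identity stated in the Preliminaries, applied to the function $f = \cD - \cP : \boolset^n \to \bbR$, which gives
\[
\sum_{x \in \boolset^n} (\cD(x) - \cP(x))^2 = 2^n \cdot \sum_{a \in \zerooneset^n} \left( \hD(a) - \hP(a) \right)^2.
\]
Substituting this into the previous inequality yields exactly the desired bound with the factor $2^n$ outside the square root.

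For the second inequality, I would invoke \Cref{lem:k-junta closeness through projections}, which gives $\lpnorm{1}{\cP - \cD} = \lpnorm{1}{\cP|_J - \cD|_J}$ when $\cP$ and $\cD$ are junta distributions sharing the same set $J$ of relevant variables. Since $|J| = k$, the marginals $\cP|_J$ and $\cD|_J$ are distributions over $\boolset^k$, and the first part of the lemma (applied in dimension $k$ rather than $n$) immediately gives the bound with the factor $2^k$ and the sum over $a \in \zerooneset^k$.

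I do not anticipate any substantive obstacle here: both inequalities are essentially standard $\ell_1$-to-$\ell_2$ conversions combined with Parseval, and the only subtle point is making sure that the normalization of the Fourier transform used in the paper (namely $\hat{f}(A) = \Exui{x}{f(x) \chi_A(x)}$, so that $\sum_x f(x)^2 = 2^n \sum_A \hat{f}(A)^2$) produces the $2^n$ (respectively $2^k$) factor in front of the square root rather than some other power of $2$. This is a matter of careful bookkeeping rather than a conceptual difficulty.
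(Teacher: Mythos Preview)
Your proposal is correct and follows essentially the same route as the paper: Cauchy--Schwarz followed by Parseval for the first inequality, then \Cref{lem:k-junta closeness through projections} plus the first inequality in dimension $k$ for the second. The paper additionally cites \Cref{lem:fourier coefficients of marginals} for the second claim, but since the statement is already phrased in terms of $\widehat{\cD|_J}$ and $\widehat{\cP|_J}$, your direct application of the first part to the $k$-dimensional marginals is sufficient and arguably cleaner.
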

\begin{proof}
First, we have
\begin{align*}
||\cD - \cP||_1 &= \sum_{x \in \boolset^n} |\cD(x) - \cP(x)| \\
&\leq 2^{n/2} \cdot \sqrt{\sum_{x \in \boolset^n} (\cD(x) - \cP(x))^2} \\
&= 2^{n} \cdot \sqrt{\Exu{x \sim \cU_n}{(\cD(x) - \cP(x))^2}} \\
&= 2^n \cdot \sqrt{\sum_{a \in \zerooneset^n} (\hD(a) - \hP(a))^2}
\end{align*}
where the first inequality holds by Cauchy-Schwarz.
Then, using \Cref{lem:k-junta closeness through projections} and \Cref{lem:fourier coefficients of marginals} we obtain the second claim.
\end{proof}

\begin{lemma}[Estimating a single Fourier coefficient]
\label[lemma]{lem:estimating a single fourier coeff}
There exists an algorithm that takes as input $m = O(\varepsilon^{-2} \log \delta^{-1})$ i.i.d. samples from a distribution $\cD$ over $\boolset^n$, $J \subseteq [n]$ and outputs $z$ such that $z = \hD(J) \pm \varepsilon \cdot 2^{-n}$ with probability at least $1-\delta$.
\end{lemma}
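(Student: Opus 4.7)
The plan is to use the natural unbiased estimator derived from the Fourier formula and analyze it with a standard Hoeffding bound.

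First, I would rewrite the target quantity as an expectation under $\cD$. By the definition of the Fourier transform in \Cref{sec:preliminaries},
\[
\hD(J) = \Exu{x \sim \cU_n}{\cD(x)\,\chi_J(x)} = 2^{-n} \sum_{x \in \boolset^n} \cD(x)\,\chi_J(x) = 2^{-n} \cdot \Exu{x \sim \cD}{\chi_J(x)}.
\]
This rewriting is essential: it turns the Fourier coefficient into something we can estimate with i.i.d. samples $x_1,\dots,x_m \sim \cD$, rather than with samples from the uniform distribution (which we do not have access to).

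Next, given the i.i.d. samples $x_1,\dots,x_m \sim \cD$, I would define the estimator
\[
z \;:=\; 2^{-n} \cdot \frac{1}{m} \sum_{i=1}^{m} \chi_J(x_i).
\]
By linearity, $\Ex{z} = \hD(J)$, and each term $\chi_J(x_i) \in \{-1,+1\}$ is bounded, so the scaled random variables $\chi_J(x_i)/m$ are independent and lie in an interval of length $2/m$. A direct application of Hoeffding's inequality then gives
\[
\Pr{\left|\frac{1}{m}\sum_{i=1}^m \chi_J(x_i) - 2^n \hD(J)\right| > \varepsilon} \;\leq\; 2 \exp\!\left(-\tfrac{m \varepsilon^2}{2}\right).
\]
Choosing $m = O(\varepsilon^{-2}\log \delta^{-1})$ with a suitable constant makes the right-hand side at most $\delta$, and multiplying both sides of the deviation bound by $2^{-n}$ yields $|z - \hD(J)| \leq \varepsilon \cdot 2^{-n}$ with probability at least $1-\delta$, as required. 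Evaluating $z$ from the samples takes $O(m \cdot |J|)$ time, so the algorithm is also efficient.

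There is no substantial obstacle here: the only point worth flagging is that the $2^{-n}$ slack on the right-hand side is exactly what makes the sample complexity free of any dimension dependence, and it arises because the Fourier transform is defined with the uniform expectation (carrying an implicit factor $2^{-n}$) rather than as a raw sum.
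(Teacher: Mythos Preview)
Your proposal is correct and matches the paper's proof essentially line for line: the paper also rewrites $\hD(J) = \Exui{x \sim \cD}{2^{-n}\chi_J(x)}$, defines the same empirical average $z = \frac{1}{m}\cdot 2^{-n}\cdot(\chi_J(x_1)+\dots+\chi_J(x_m))$, and invokes ``standard concentration bounds'' where you specifically cite Hoeffding.
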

\begin{proof}
We have $\hD(J) = \Exui{x \sim \cD}{2^{-n} \cdot \chi_{J}(x)}$ and $\Exui{x \sim \cD}{(2^{-n} \cdot \chi_{J}(x))^2} = 2^{-2n}$.
The desiderata follows by standard concentration bounds applied to the average $z= \frac{1}{m} \cdot 2^{-n} \cdot (\chi_{J}(x_1) + \dots + \chi_{J}(x_m))$ where $x_1 \dots x_m$ are i.i.d. samples from $\cD$.
\end{proof}

Finally, we prove \Cref{lem:reduction from junta distribution to parity distribution}.

\begin{proof}[Proof of \Cref{lem:reduction from junta distribution to parity distribution}]
Here we design the algorithm $\cA'$ that learns an arbitrary $k$-junta distribution $\cD$ with set of relevant variables $J^\star$.
denote by $\cJ$ the set of all $J \subseteq [n]$ such that $|\hD (J)| \geq \varepsilon \cdot 2^{-( n + k/2)}$. We have $\cJ \subseteq 2^{J^\star}$, which implies $|\cJ| \leq 2^k$. 
We will show a procedure $\findheavyfourier$ (Algorithm \ref{alg:find heavy fourier}) such that 
the following holds.

\begin{lemma}
\label[lemma]{lem:findheavyfourier}
For each $J \in \cJ$, $\findheavyfourier$ (Algorithm \ref{alg:find heavy fourier}) returns $J$ with probability $ \Omega(2^{-k})$. 
Moreover, whenever $\findheavyfourier$ returns $(J, z)$ then both (i) $\hD(J) \neq 0$, and (ii) $z = \hD(J) \pm \varepsilon \cdot 2^{-( n + k/2)}$ hold with probability at least $ 1 - \poly(2^{-k})$. Finally, $\findheavyfourier$ uses $\poly(2^k, \varepsilon^{-1}) + T\ld(n, k, O(2^{k/2})\rd)$ time.
\end{lemma}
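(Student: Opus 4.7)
The plan is to implement $\findheavyfourier$ via the noise-injection strategy of \cite{feldman2006new}. Sample a random matrix $A \in \zerooneset^{m \times n}$ uniformly with $m := k+2$, and consider the distribution $\cD_A(x) := \Exui{p \sim \cU_m}{\cD(x \oplus A^\top p)}$ from \Cref{lem:characterization of function with added noise}. For every fixed $J \in \cJ$, \Cref{lem:only one fourier survives} guarantees that, with probability at least $2^{-(m+1)} = \Omega(2^{-k})$ over $A$, the distribution $\cD_A$ is a noisy parity distribution on relevant variables $J$ with noise rate $\eta$ satisfying $1 - 2\eta = 2^n |\hD(J)| \geq \varepsilon \cdot 2^{-k/2}$ (because $J \in \cJ$). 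By \Cref{lem:adding linear noise}, samples from $\cD_A$ are simulated by computing $y \oplus A^\top q$ for $y \sim \cD$ and $q \sim \cU_m$, so we feed such samples to the LPDN learner $\cA$ and ask for TV-accuracy $\varepsilon'' := (1-2\eta)/100$. Using the polynomial dependence of $\cA$ on $\varepsilon''^{-1}$, this single call costs $\poly(2^k, \varepsilon^{-1}) \cdot T\bigl(n, k, O(2^{k/2})\bigr)$ time.

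Under the good matrix event, $\cA$ returns an evaluator $\cP_\cA$ with $\lpnorm{TV}{\cP_\cA - \cD_A} \leq \varepsilon''$. The PMF of a noisy parity distribution has the explicit form $\cD_A(x) = (1 + \sigma(1-2\eta)\chi_J(x))/2^n$ for some sign $\sigma \in \boolset$, so $\cD_A(x) - 2^{-n}$ equals $\pm(1-2\eta)/2^n$ depending on the value of $\sigma \chi_J(x)$. Consequently, for $g(x) := \operatorname{sign}\bigl(\cP_\cA(x) - 2^{-n}\bigr)$ to differ from $\sigma\chi_J(x)$, we must have $|\cP_\cA(x) - \cD_A(x)| \geq (1-2\eta)/2^n$; a Markov argument on $\sum_x|\cP_\cA(x) - \cD_A(x)| \leq 2\varepsilon''$ then bounds the number of such $x$ by $2\varepsilon'' \cdot 2^n/(1-2\eta) = 2^n/50$. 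Hence $\lpnorm{1}{g - \sigma \chi_J} \leq 2^n/25$, which is well within the regime where \Cref{lem:learn with queries} (applied to $g$, whose queries we answer using the evaluator $\cP_\cA$) recovers $J$ in $\poly(n)$ time with high probability.

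It remains to estimate $\hD(J)$ for the candidate $J$. We do this directly from a fresh batch of samples drawn from $\cD$ using \Cref{lem:estimating a single fourier coeff} with additive accuracy $\varepsilon \cdot 2^{-(n+k/2)}/10$ and failure probability $\delta := \poly(2^{-k})$, at cost $\poly(2^k, \varepsilon^{-1})$ samples and time. The algorithm returns $(J, z)$ if $|z| \geq \varepsilon \cdot 2^{-(n+k/2)}/2$ and $\bot$ otherwise. Whenever we return, the thresholding combined with the estimation guarantee forces $|\hD(J)| \geq \varepsilon \cdot 2^{-(n+k/2)}/2 - \varepsilon \cdot 2^{-(n+k/2)}/10 > 0$, giving (i), and $z = \hD(J) \pm \varepsilon \cdot 2^{-(n+k/2)}/10$, giving (ii), both with probability $1 - \poly(2^{-k})$. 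Conversely, for any $J \in \cJ$, the matrix event holds with probability $\Omega(2^{-k})$; conditional on it, $\cA$ succeeds with constant probability and the verification passes with probability $1 - \poly(2^{-k})$, so $(J, \cdot)$ is returned with probability $\Omega(2^{-k})$.

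The main obstacle is soundness on the $1 - \Omega(2^{-k})$ fraction of matrices $A$ for which $\cD_A$ is not a genuine noisy parity distribution: on these, $\cA$'s output is unreliable and the extraction of $J$ via \Cref{lem:learn with queries} may propose a spurious set. The direct Fourier estimate from $\cD$-samples plays the role of a one-sided safety net, rejecting spurious candidates with probability $1 - \poly(2^{-k})$ while preserving the $\Omega(2^{-k})$ recovery rate for every heavy coefficient in $\cJ$; this asymmetry is exactly what the lemma requires.
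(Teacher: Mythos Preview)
Your proof follows the same strategy as the paper's: sample a random matrix $A$, use \Cref{lem:only one fourier survives} to isolate one Fourier coefficient, feed the noisified samples to the LPDN learner $\cA$, and then certify the returned candidate $J$ by estimating $\hD(J)$ directly from $\cD$ via \Cref{lem:estimating a single fourier coeff} and thresholding. The verification step is identical to the paper's and gives (i) and (ii) for exactly the same reason.

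The one genuine difference is in how $J$ is read off from $\cA$'s output. The paper's Algorithm~\ref{alg:find heavy fourier} simply writes ``let $\chi_\emptyset + y_J\chi_J$ be the noisy parity returned by $\cA$'', effectively assuming $\cA$ hands back the set $J$ explicitly. You instead treat $\cA$ as a black-box evaluator and recover $J$ by rounding $\cP_\cA(x)-2^{-n}$ to $\boolset$ and invoking \Cref{lem:learn with queries}. Your route is arguably more honest given the paper's convention that distribution learners return evaluators, and the Markov argument you give is correct.

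Two small points to fix. First, you set $\varepsilon'' := (1-2\eta)/100$, but $\eta$ is determined by the (unknown) $J$ that happens to survive; the algorithm cannot choose this value. Replace it by the uniform bound $\varepsilon'' := \varepsilon\cdot 2^{-k/2}/100$, which suffices for every $J\in\cJ$ and is what your time analysis already implicitly uses. Second, precisely because you must call $\cA$ with accuracy $\Theta(\varepsilon\cdot 2^{-k/2})$ to make the rounding work, your single call costs $\poly(2^k,\varepsilon^{-1})\cdot T(n,k,O(2^{k/2}))$, as you write, rather than the additive $\poly(2^k,\varepsilon^{-1}) + T(n,k,O(2^{k/2}))$ stated in the lemma. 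The paper gets the additive form only because it takes $J$ directly from $\cA$'s output and so can call $\cA$ at constant accuracy. This discrepancy is harmless for the overall reduction (the outer proof of \Cref{lem:reduction from junta distribution to parity distribution} absorbs $\poly(2^k)$ factors anyway), but it does mean your argument proves a slightly weaker running-time claim than the lemma as stated.
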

We defer the proof of \Cref{lem:findheavyfourier} and complete our reduction assuming \Cref{lem:findheavyfourier}.
Let $\cI$ be the set (not a multi-set) of sets $J$ returned by $\findheavyfourier$ after re-running it $O(k 2^{2k})$ times.
By coupon collector, $\cJ \subseteq \cI \subseteq 2^{J^\star}$ with high probability. Additionally, for all $J \in \cI$, $z_J$ satisfies $z_J = \hD(J) \pm \varepsilon \cdot 2^{-( n + k/2)}$ with high probability (by union bound).
Define $\cZ := \sum_{J \in \cI} z_J \cdot \chi_J$ and notice that $\cZ$ is a junta with relevant variables $L := \bigcup \cI \subseteq J^\star$. By \Cref{lem:tv distance Fourier bound} and \Cref{lem:fourier coefficients of marginals} we have
\begin{align*}
||\cZ - \cD||_1 &= \\
||\cZ|_{J^\star} - \cD|_{J^\star}||_1 &\leq \\
2^{k} \cdot \sqrt{\sum_{J \subseteq J^\star} (\widehat{\cD|_{J^\star}}(J) - \hZ|_{J^\star}(J))^2} &\leq \\
2^{k} \cdot \sqrt{\sum_{J \subseteq J^\star} 2^{2(n-k)} \cdot  (\widehat{\cD}(J) - \hZ(J))^2} &\leq \\
2^{k} \cdot \sqrt{2^k \cdot 2^{2(n-k)} \cdot \varepsilon^2  \cdot 2^{-2(n + k/2)}} &\leq \varepsilon.
\end{align*}
Finally, $\cZ$ might not be a probability distribution. We would like to ``round'' $\cZ$ to a probability distribution within $\ell_1$-distance $O(\varepsilon)$ from $\cD$. Since there exists a probability distribution $\cD$ satisfying $\lpnorm{1}{\cD-\cZ} \leq \varepsilon$, then such rounding exists. Here we show how to implement such rounding efficiently.

Since $\cZ$ is a junta with relevant variables $L \subseteq J^\star$, then we can simply round $\cZ|_L$ and extend it to a rounding of $\cZ$ by taking the product distribution with $\cU_{n -\ell}$, where $\ell = |L|$.
First, for all $x \in \boolset^\ell$ such that $\cZ|_L(x) < 0$ we set $\cZ|_L(x) \leftarrow 0$.
Then, we normalize all entries of $\cZ|_L$ so that $\sum_{x \in \boolset^\ell} \cZ|_L(x) = 1$.
It is straightforward to verify that both operations at most double the $\ell_1$ distance between $\cZ$ and $\cD$. Hence, after rounding $\cZ$ still satisfies $\lpnorm{1}{\cZ - \cD} = O(\varepsilon)$.

\paragraph*{Complexity.}
We are left to bound the computational 
complexity of our reduction.
We make $\poly(2^k)$ calls to $\findheavyfourier$. Each call uses $\poly(2^k, \varepsilon^{-1}) + T\ld(n, k, O(2^{k/2})\rd)$ time.
All other operations combined use at most $\poly(2^k, n)$ time.

\begin{algorithm} 
\textbf{Input:} Samples from $\cD \in \Delta(\boolset^n)$ \\
\textbf{Output:} $J \in \cJ$ and $y_J = \hD(J) \pm \varepsilon \cdot 2^{-(n+k/2)}$ 

\hrulefill

 Let $m = k + 1$ \\
 Sample a uniformly random matrix $A \in \zerooneset_2^{m\times n}$ \\
 Sample $x_1 \dots x_S \sim \cD$ i.i.d. \\
 Sample $q_1 \dots q_S \sim \cU_m$ i.i.d. \\
 Let $\chi_\emptyset+ y_J \cdot \chi_{J}$ be the noisy parity distribution returned by $\cA(x_1 \oplus A^\top q_1, \dots , x_S \oplus A^\top q_S)$, where $\cA$ is PAC if its input has noise of rate $1 - O(2^{-k/2})$. \\
 Run the algorithm from \Cref{lem:estimating a single fourier coeff} on $\cD$, $J$ and let $z$ be the returned estimate. Set parameters so that $z = \hD(J) \pm \frac{\varepsilon}{3} \cdot 2^{-( n + k/2)}$ with probability $\geq 1 - \poly(2^{-k})$. \\
 If $|z| \geq \frac{2 \varepsilon}{3} \cdot 2^{-(n+k/2)}$ \Return $J$ and $y_J$ \\
 
\caption{$\findheavyfourier$ \label{alg:find heavy fourier}}
\end{algorithm}

In the end, we show that $\findheavyfourier$ satisfies the properties stated in \Cref{lem:findheavyfourier}.

\begin{proof}[Proof of \Cref{lem:findheavyfourier}]
Fix $J \in \cJ$.
Define the event $\cE_J$ that \Cref{eq:desiderata only one fourier survives} holds, which happens with probability at least $2^{-m-1} \geq 2^{-(k+2)}$ by \Cref{lem:only one fourier survives}.
Conditioned on $\cE_J$, $\cD_A = \chi_\emptyset + \hD(J) \cdot \chi_J$ is a noisy junta distribution with relevant variables $J$.
Moreover, the samples $x_1 \oplus A^\top q_1 \dots x_S \oplus A^\top q_S$ are distributed according to $\cD_\cA$ (\Cref{lem:adding linear noise}).

Let $\cE_\cA$ be the event that $\cA$ is correct.
Conditioned on $\cE_J$, the noise rate of $\cD_A$ is $\eta_J =\frac 1 2 \cdot (1 - 2^n \cdot \hD(J))$, thus $J\in \cJ$ implies $1- 2\eta_J \geq \varepsilon \cdot 2^{-k/2}$.
Since, conditioned on $\cE_J$, the noise rate of $\cD_A$ is appropriate we have $\Pr{\cE_\cA \,|\, \cE_J} = \Omega(1)$.

From now on, condition on $\cE_\cA \cap \cE_J$.
Suppose that $\cA$ returns the noisy parity $\chi_\emptyset + z \cdot \chi_J$. 
Since $J \in \cJ$, $|\hD(J)| \geq \varepsilon \cdot 2^{-(n + k/2)}$ so $z = \hD(J) \pm \frac{\varepsilon}{3} \cdot 2^{-(n +k/2)}$ implies $|z| \geq \frac{2 \varepsilon}{3} \cdot 2^{-(n + k/2)}$.
Thus, $\findheavyfourier$ returns $(J, z)$ correctly with probabability $\Pr{\cE_\cA \cap \cE_J} = \Omega( 2^{-k})$.
Finally, points (i) and (ii) hold because of correctness of \Cref{lem:learning one fourier coefficient}.

Now we analyze the running time 
of $\findheavyfourier$.
$\cA$ uses $T\ld(n, k, O(2^{k/2})\rd)$ time.
The algorithm from \Cref{lem:learning one fourier coefficient} uses $\poly(2^k, \varepsilon^{-1})$ 
time.
All other operations combined use $\poly(n, 2^k)$ time.
\end{proof}
\end{proof}

\subsection{Reducing Parity Functions with Noise to Junta Distributions}
\label{sec:reducing noisy parity function to junta distributions}
In this section we prove the next lemma.

\begin{lemma}
\label[lemma]{lem:reduction from noisy parities to junta distributions}
Let $\cA$ be an algorithm that learns $k$-junta distributions over $\boolset^n$. Suppose that $\cA$ uses $T\ld(n, k\rd)$ time.
Then, there exists an algorithm $\cA'$ that  learns $k$-parity functions on $\boolset^n$ with noise rate $\eta < 1/2$ under the uniform distribution. Moreover, $\cA'$ uses at most $\poly(2^k, (1-2\eta)^{-1}) \cdot T\ld(n, k\rd)$ time.
\end{lemma}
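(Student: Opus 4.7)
The plan is to reduce LPN to LJD by converting labeled LPN samples $(x, y)$ into i.i.d.\ samples from a $k$-junta distribution $\cD$ whose set of relevant variables is the unknown set $J$, running the LJD learner $\cA$ on $\cD$, thresholding its output to obtain a Boolean function $f$ close to $\chi_J$, and finally applying \Cref{lem:learn with queries} to recover $J$ exactly. Once $J$ is known, we simply output the hypothesis $\chi_J$, which matches the noiseless target everywhere.

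The filtering step retains an LPN sample $x$ exactly when its label is $y = 1$ and discards it otherwise. Since $\Pr{y = 1} = \tfrac12$ regardless of $\eta$, only a constant factor of the sample budget is lost, and a direct computation of the conditional distribution of $x$ given $y = 1$ shows that each kept sample is drawn from
\[
\cD \;:=\; (1 - 2\eta) \cdot \cU_{\{w \,:\, \chi_J(w) = 1\}} + 2\eta \cdot \cU_n,
\]
which is a $k$-junta distribution with relevant variables $J$. Its PMF equals $(1-\eta)/2^{n-1}$ on $\{\chi_J = 1\}$ and $\eta/2^{n-1}$ on its complement, so both values are separated from the threshold $2^{-n}$ by exactly $(1-2\eta)/2^n$.

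Running $\cA$ on the filtered samples with TV precision $\varepsilon' := c(1-2\eta)$ for a small absolute constant $c > 0$ yields a query-accessible evaluator of some PMF $\cF$ satisfying $\lpnorm{1}{\cF - \cD} \le 2\varepsilon'$. Define $f \colon \boolset^n \to \boolset$ by $f(x) = 1$ iff $\cF(x) > 2^{-n}$. Every $x$ at which $f(x) \neq \chi_J(x)$ contributes at least $(1-2\eta)/2^n$ to $\lpnorm{1}{\cF - \cD}$, so the number of such $x$ is at most $2\varepsilon' \cdot 2^n / (1-2\eta) = 2c \cdot 2^n$. For $c$ sufficiently small, this makes $f$ close enough to $\chi_J$ (in the sense required by \Cref{lem:learn with queries}) that the latter, invoked with query access to $f$ through the evaluator of $\cF$, recovers $J$ in $\poly(n)$ time with high probability.

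The main subtlety is that $\eta$ is unknown to the algorithm, and both $\cD$ itself and the target precision $\varepsilon'$ depend on it. I bypass this via an exponential search over successive guesses $\gamma = 1, \tfrac12, \tfrac14, \ldots$ for $1 - 2\eta$: for each $\gamma$ we run the pipeline above with precision $c\gamma$, obtain a candidate set $J$, and validate it by estimating $\Exui{x,y}{y \cdot \chi_J(x)}$ on a held-out batch of LPN samples, a quantity that equals $1 - 2\eta$ when $J$ is correct and $0$ when $J$ is wrong. The search halts the first time a candidate passes validation, which happens within $O(\log((1-2\eta)^{-1}))$ rounds; each round costs $\poly(2^k, n, (1-2\eta)^{-1}) \cdot T(n, k)$ time, delivering the bound claimed in the lemma.
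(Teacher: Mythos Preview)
Your proof is correct and follows the same approach as the paper: filter LPN samples by keeping those with label $y=1$ to obtain samples from the $k$-junta distribution $\cD = (1-2\eta)\,\cU_{\{\chi_J=1\}} + 2\eta\,\cU_n$, run $\cA$ with precision proportional to $1-2\eta$, threshold the returned PMF at $2^{-n}$ (which is exactly the midpoint of the two PMF values) to get a Boolean $f$ close to $\chi_J$, recover $J$ via \Cref{lem:learn with queries}, and handle the unknown $\eta$ by an exponential search validated through the empirical correlation $\E[y\,\chi_{\tilde S}(x)]$. The one edge case you omit and the paper treats explicitly is $S=\emptyset$: then $\cD=\cU_n$, the thresholding is vacuous, and \Cref{lem:learn with queries} need not return $\emptyset$; the paper patches this by also submitting $\emptyset$ as a candidate to the validation step in every round.
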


\begin{proof}
Let $\chi_S$ be the parity function that we would like to learn. We observe samples of the form $(x, y)$ where $x \sim \cU_n$ and $y\sim \chi_S(x) \oplus \ber(\eta)$, where $\ber(\eta)$ is a Bernoulli random variable of parameter $\eta$. Namely, $y$ equals $\chi_S(x)$ flipped with probability $\eta$. Equivalently, we can write
\begin{equation*}
    y = \begin{cases}
			\ber(1/2), & \text{with probability $2\eta$ }\\
            \chi_S(x), & \text{with probability $1-2\eta$.}
		 \end{cases}
\end{equation*}

We construct a sample $z$ from a junta distribution $\cD$ as follows.
Given $(x, y)$ sampled as above, if $y = 1$ set $ z\leftarrow x$, otherwise sample a new independent copy of $(x, y)$. 
We have $z \sim \cD$ where $\cD$ is defined as the convex combination 
\[
\cD := \frac{1- 2\eta}{1 + 2\eta} \cdot \cP + \frac{4\eta}{1 + 2\eta} \cdot \cU_n
\]
and $\cP$ is the uniform distribution over $\{x \in \boolset^n \,|\, \chi_S(x) = 1\}$.
In fact, if $y = \ber(1/2)$ we have $z \sim U_n$ and if $y = \chi_S(x)$ then with probability $1/2$ $y =\chi_S(x) = 1$ and we have $z \sim \cP$ and with probability $1/2$ we reject. The probability of \emph{not} rejecting is $1 - \frac{1}{2}\cdot (1-2\eta)$, matching the coefficients above.
Notice that the Fourier representation of $\hD$ depends on whether: (i)  $S \neq \emptyset$ or (ii)$S= \emptyset$. 
In case (i) we have
\begin{equation*}
    \hD(A) = 2^{-n} \cdot \begin{cases}
			1-2\eta, & \text{if $A = S$}\\
            2\eta, & \text{if $A = \emptyset$} \\
            0, & \text{otherwise.}
		 \end{cases}
\end{equation*}
In case (ii) we have
\begin{equation*}
    \hD(A) = 2^{-n} \cdot \begin{cases}
            1, & \text{if $A = S = \emptyset$} \\
            0, & \text{otherwise.}
		 \end{cases}
\end{equation*}

\paragraph*{The algorithm}
Here we describe our algorithm $\cA'$.
Assume that we have a procedure $\findset(\bar\eta)$ that takes as input a guess for the noise rate and returns a candidate $\tilde S$ for $S$.
First, we design a procedure to certify $\tilde S = S$.
Draw $m = \poly(2^k, (1-2\bar\eta)^{-1})$ samples $(x_i, y_i)$ where $x_i \sim \cU_n$ and $y_i$ equals $\chi_S(x_i)$ corrupted with noise of rate $\eta$. Define
\[
\cI_{\tilde S} := \frac{1}{m}\cdot \sum_{i\in [m]} \chi_{\tilde S}(x_i) \cdot y_i
\]
Since $\Ex{(\chi_{\tilde S}(x_i) \cdot y_i)^2} = 1$, applying standard concentration inequalities we obtain, with high probability,
$\cI_{\tilde S} = \Ex{\chi_{\tilde S}(x_i) \cdot y_i}  \pm o(1-2\bar\eta)$.
Moreover, 
\[
\Ex{\chi_{\tilde S}(x_i) \cdot y_i} = 
\begin{cases}
		1-2\eta &\text{if $\tilde S = S \neq \emptyset$} \\
       1 &\text{if $\tilde S = S = \emptyset$} \\
        0 &\text{if $\tilde S \neq S$.}
	\end{cases}
\]
Therefore whenever we observe $\cI_{\tilde S} >  2 \cdot (1-2\bar \eta)$, then we can certify that $\tilde S = S$. 
Moreover, if we have $1-2\bar\eta \leq \frac{1}{4} \cdot (1-2\eta)$ and $\tilde S = S$ then $\cI_{\tilde S} > 2 \cdot (1-2\bar \eta)$ and we can certify $\tilde S = S$.
Now, we describe a procedure $\findset(\bar\eta)$ such that whenever $1-2\bar\eta \leq 1-2\eta$ returns a correct candidate $\tilde S = S$.
These two ingredients together make it possible to perform an exponential search over the parameter $1- 2\bar \eta$ by shrinking it in half at each round.
As soon as $1-2\bar\eta \leq \frac{1}{4} \cdot (1-2\eta)$, $\findset(\bar \eta)$ returns a correct $\tilde S = S$ and $\cI_{\tilde S} > 2 (1-2\bar\eta)$.
Whenever it happens, we return $\tilde S$.
Notice that the exponential search ensures that we obtain the right complexity in terms of $1-2\eta$.

Here we describe $\findset(\bar \eta)$. Recall that we only need $\findset$ to be correct if $1-2\bar\eta \leq 1-2 \eta$.
First, we run $\cA$  with error parameter $\varepsilon \cdot (1-2\bar\eta)$ on the samples $z \sim \cD$ generated as described above.
$\cA$ returns $\cF$ such that $||\cD - \cF||_1 \leq \varepsilon \cdot (1-2\bar\eta)$.
Recall that $\cA$ returns an \emph{evaluator}, so we have \emph{query access} to $\cF$. 

Assume $S \neq \emptyset$.
For each $x \in \boolset^n$ we have $\cD(x) = 2\eta \cdot 2^{-n} + (1 - 2\eta) \cdot 2^{-n+1} := \alpha$ if $\chi_S(x) =1 $ and 
$\cD(x) = 2\eta \cdot 2^{-n} := \beta$ if $\chi_S(x) = -1$. Notice that $\beta - \alpha = (1 - 2\eta) \cdot 2^{-n+1}$. If we have $S=\emptyset$, then $\cD(x) = 2^{-n}$ for all $x \in \boolset$.
We define a function $f:\boolset^n \rightarrow \boolset$ by rounding $\cF$.
If $\cF(x)$ is closer to $\beta$ than $\alpha$ then we set $f(x) = -1$, otherwise we set $f(x) = 1$.

A simple charging argument shows that whenever $f(x) \neq \chi_S(x)$ we can charge at least $(\beta - \alpha) / 2$ to $\lpnorm{1}{\cF - \cD}$. 
Therefore, $\lpnorm{1}{\cF - \cD} \leq \varepsilon \cdot (1-2\bar\eta) \leq \varepsilon \cdot (1-2\eta)$ implies $\lpnorm{1}{f - \chi_S} \leq \varepsilon \cdot 2^n$.
Finally, we use the algorithm from \Cref{lem:learn with queries} to find $\tilde S$. 

Unfortunately, this rounding scheme does not work when $S = \emptyset$ because $\cD(\cdot)$ is constant. To obviate this issue, we modify $\findset$ so that it returns two set at each round: $\tilde S$ and $\emptyset$. In this way, the run our certifying procedure both on $\tilde S$ and $\emptyset$. If either  $S = \tilde S$ or $S = \emptyset$, the certifying procedures detects that $\cI_{\tilde S}$ (respectively $\cI_{\emptyset}$) is larger than $2\cdot (1-2\bar\eta)$.

The complexity of each step of our reduction is $\poly(n, (1-2\bar\eta)^{-1}) \cdot T(n, k)$. Indeed, boosting the accuracy of $\cA$ to $\varepsilon \cdot (1-2\bar\eta)$ (in $\ell_1$) costs $\poly(\varepsilon^{-1}, (1-2\bar\eta)^{-1})$, the algorithm in \Cref{lem:learn with queries} runs in time $\poly(n)$ and all other operations combined cost at most $\poly(n)$.
Finally, our reduction uses time $\poly(n, (1-2\eta)^{-1}) \cdot T(n, k)$ thanks to our exponential search. 

\end{proof}
\section{Algorithm}
\label{sec:algorithm}

A key ingredient of our algorithm is a tolerant tester for distribution identity that is efficient in the high-confidence regime. First, we develop the tester and then we apply it to learning junta distributions.

\subsection{High-Confidence Tolerant Identity Tester}
\label{sec:high-confidence tolerant tester}
The goal of this section is to develop a sample-efficient tolerant identity tester for the high-probability regime. In particular, boosting off-the-shelf identity testers to failure probability $\delta$ incurs a multiplicative overhead of $\log \delta^{-1}$ on the number of samples. We improve over standard failure-probability boosting and obtain an algorithm that only pays an \emph{additive} overhead of $\log \delta^{-1}$.
To this end, we mimic the techniques used in \cite{diakonikolas2018sample}. 

\begin{definition}[Tolerant Identity Tester]
\label{def:tolerant identity tester}
We say that an algorithm $T$ is an $\ell_1$ $(\varepsilon_1, \varepsilon_2)$-tolerant identity tester for the distribution $\cP$ with sample complexity $m$ and failure probability $\delta$ if the following holds. 

For any distribution $\cD$, let $x_1 \dots x_m$ be i.i.d. samples from $\cD$.
\begin{itemize}
    \item If $||\cD - \cP||_1 \leq \varepsilon_1$ then $T(x_1 \dots x_m)$ returns ``close'' with probability at least $1-\delta$;
    \item If $||\cD - \cP||_1 \geq \varepsilon_2$ then $T(x_1 \dots x_m)$ returns ``far'' with probability at least $1-\delta$.
\end{itemize}
\end{definition}

Recall the ``Bernstein'' version of the standard bounded differences (McDiarmid) inequality.
\begin{lemma}[Bernstein version of McDiarmid's inequality~\cite{ying2004mcdiarmid}]\label[lemma]{lem:mcdiarmid}
Let {$Y_1,\dots, Y_m$} be independent random variables taking values in the set $\mathcal{Y}$. 
Let $f: \mathcal{Y}^m \rightarrow \bbR$ be a function of {$y_1, \dots ,y_m$} so that for every {$j \in [m]$}
and $y_1, \ldots, y_m, y_j^\prime \in \mathcal{Y}$, we have that:
\[
 \vert f(y_1,\ldots, y_j, \ldots, y_m) - f(y_1, \ldots, y_j^\prime, \ldots, y_m) \vert \leq B \;.
\]
Then, we have:
\begin{align}
\Pr{f(Y_1, \ldots, Y_m)-\E[f] \geq z } \leq \exp\left(\frac{-2z^2}{{m} B^2}\right) \;. \nonumber 
\end{align}
\end{lemma}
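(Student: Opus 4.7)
The plan is to prove this via the classical Doob-martingale approach to bounded differences inequalities, combined with the Azuma--Hoeffding inequality. The key observation is that although $f(Y_1, \ldots, Y_m)$ can depend in a complicated way on all the independent inputs, the bounded-differences hypothesis lets us decompose $f - \E[f]$ as a sum of martingale increments each bounded by $B$.

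First I would construct the Doob martingale associated to $f$: for $j = 0, 1, \ldots, m$, set
\[
Z_j \;:=\; \E\!\left[f(Y_1,\ldots,Y_m) \mid Y_1,\ldots,Y_j\right],
\]
so that $Z_0 = \E[f]$ and $Z_m = f(Y_1,\ldots,Y_m)$. Writing $D_j := Z_j - Z_{j-1}$, the sequence $(Z_j)$ is a martingale with respect to the natural filtration, and by telescoping $f - \E[f] = \sum_{j=1}^m D_j$.

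The crucial step is to bound $|D_j| \le B$ almost surely. Define the averaged function
\[
g_j(y_1,\ldots,y_j) \;:=\; \E\!\left[f(y_1,\ldots,y_j, Y_{j+1},\ldots,Y_m)\right],
\]
so that $Z_j = g_j(Y_1,\ldots,Y_j)$ by independence of the $Y_i$. The bounded-differences hypothesis on $f$ then propagates to $g_j$: for any $y_1,\ldots,y_{j-1}$ and any $y_j, y_j' \in \mathcal{Y}$, averaging over $Y_{j+1},\ldots,Y_m$ gives $|g_j(y_1,\ldots,y_{j-1},y_j) - g_j(y_1,\ldots,y_{j-1},y_j')| \le B$. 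Introducing an independent copy $Y_j'$ of $Y_j$, we can write $D_j$ as the conditional expectation of $g_j(Y_1,\ldots,Y_j) - g_j(Y_1,\ldots,Y_{j-1},Y_j')$ given $Y_1,\ldots,Y_j$, which is therefore bounded by $B$ in absolute value.

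Finally, applying the Azuma--Hoeffding inequality to the martingale $(Z_j)_{j=0}^m$ with increment bound $B$ yields
\[
\Pr[Z_m - Z_0 \ge z] \;\le\; \exp\!\left(\frac{-2z^2}{\sum_{j=1}^m B^2}\right) \;=\; \exp\!\left(\frac{-2z^2}{mB^2}\right),
\]
which is the claimed bound. The only non-routine step is the martingale-difference bound, which reduces to the coupling argument above; the rest is a black-box invocation of Azuma--Hoeffding. Since this statement is a well-known inequality attributed to \cite{ying2004mcdiarmid}, no additional ideas should be needed beyond this standard recipe.
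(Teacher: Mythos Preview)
The paper does not prove this lemma; it is quoted from \cite{ying2004mcdiarmid} and used as a black box in the proof of the high-confidence tolerant identity tester. Your Doob-martingale plus Azuma--Hoeffding argument is the standard textbook proof and is correct in outline.

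One small point about the constant: from your coupling argument you conclude only $|D_j|\le B$, and under the common form of Azuma--Hoeffding (namely $|D_j|\le c_j \Rightarrow \Pr[\sum D_j\ge z]\le \exp(-z^2/(2\sum c_j^2))$) this yields $\exp(-z^2/(2mB^2))$, which is off by a factor of $4$ in the exponent from the stated bound. To recover $\exp(-2z^2/(mB^2))$ you need the slightly sharper observation that, conditionally on $Y_1,\ldots,Y_{j-1}$, the increment $D_j$ lies in an interval of length at most $B$ (not $2B$): indeed $g_j(Y_1,\ldots,Y_{j-1},\cdot)$ has oscillation at most $B$ by the bounded-differences hypothesis, and $Z_{j-1}$ is its mean and hence lies in the same interval. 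Feeding interval length $B$ into Hoeffding's lemma then gives the stated constant.
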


Now we can prove the main result of this section.

\begin{lemma}[High-confidence tolerant identity tester]
\label[lemma]{lem:high-confidence tolerant identity tester}
For any $\alpha, \epsilon, \delta> 0$ and any distribution $\cD$ with support $[s]$, there exists an $\ell_1$ $(\alpha, \alpha + \varepsilon)$-tolerant identity tester for $\cD$ with sample complexity $O(\varepsilon^{-2} (s + \log \delta^{-1}))$.
\end{lemma}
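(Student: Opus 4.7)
The plan is to use the natural test statistic $Z = \lpnorm{1}{\hat\cD_m - \cD}$, where $\hat\cD_m$ denotes the empirical distribution obtained from $m = C\varepsilon^{-2}(s + \log \delta^{-1})$ i.i.d.\ samples from the unknown distribution $\cD_{\mathrm{true}}$, for a sufficiently large constant $C$. The tester outputs \emph{close} if $Z \leq \alpha + \varepsilon/2$ and \emph{far} otherwise. By the triangle inequality,
\[
\bigl|\, Z - \lpnorm{1}{\cD_{\mathrm{true}} - \cD} \,\bigr| \;\leq\; \lpnorm{1}{\hat\cD_m - \cD_{\mathrm{true}}},
\]
so correctness reduces to proving that $\lpnorm{1}{\hat\cD_m - \cD_{\mathrm{true}}} \leq \varepsilon/2$ with probability $\geq 1-\delta$.

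The first step is to bound the expectation. Writing $p_i$ for the true probability of element $i \in [s]$ and $\hat p_i$ for its empirical counterpart, we have $\bE[|\hat p_i - p_i|] \leq \sqrt{\Var{\hat p_i}} = \sqrt{p_i(1-p_i)/m}$. Summing over $[s]$ and applying Cauchy--Schwarz against the uniform weights yields
\[
\bE\bigl[\,\lpnorm{1}{\hat\cD_m - \cD_{\mathrm{true}}}\,\bigr] \;\leq\; \sum_{i=1}^s \sqrt{\tfrac{p_i}{m}} \;\leq\; \sqrt{\tfrac{s}{m}}.
\]

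The second step is concentration via \Cref{lem:mcdiarmid}. View $Z$ as a function $f(Y_1,\dots,Y_m)$ of the samples; replacing a single $Y_j$ alters $\hat\cD_m$ in at most two coordinates, each by $1/m$, so $Z$ changes by at most $B = 2/m$. McDiarmid then gives
\[
\Pr{ Z - \bE[Z] \geq z } \;\leq\; \exp\!\left(-\tfrac{z^2 m}{2}\right),
\]
so $z = \sqrt{2 m^{-1} \log \delta^{-1}}$ suffices for failure probability $\delta$. Combining with the expectation bound, $\lpnorm{1}{\hat\cD_m - \cD_{\mathrm{true}}} \leq \sqrt{s/m} + \sqrt{2 m^{-1} \log \delta^{-1}}$ with probability $\geq 1-\delta$, and the choice $m = C\varepsilon^{-2}(s + \log \delta^{-1})$ makes the right-hand side at most $\varepsilon/2$ for large enough $C$. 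Together with the triangle inequality above, this certifies correctness of the threshold test in both the close and far regimes.

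I do not foresee a genuine obstacle: the main subtlety is simply that one cannot use a Chernoff bound on a single $|\hat p_i - p_i|$ and then union-bound over $[s]$ (this would cost an extra $\log s$), which is precisely why McDiarmid on the aggregate $\ell_1$ statistic is essential. The $\sqrt{s/m}$ bound on the expectation is what allows the support size $s$ to enter \emph{additively} in the sample complexity, rather than multiplicatively with $\log \delta^{-1}$ as in naive boosting.
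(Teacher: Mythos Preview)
Your proposal is correct and follows essentially the same approach as the paper: both use the empirical $\ell_1$ distance to the known distribution as the test statistic, reduce via the triangle inequality to bounding $\lpnorm{1}{\hat\cD_m - \cD_{\mathrm{true}}}$, control its expectation by $\sqrt{s/m}$ via a per-coordinate variance bound, and then apply McDiarmid (\Cref{lem:mcdiarmid}) for concentration. The only cosmetic differences are notation and your (more careful) Lipschitz constant $B=2/m$ versus the paper's $B=1/m$, which is immaterial.
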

\begin{proof}
We define the same estimator used in \cite{diakonikolas2018sample} and we analyze its mean and concentration similarly to \cite{diakonikolas2018sample}.
Let $m = O(\varepsilon^{-2} (s + \log \delta^{-1}))$ and $Y_1 \dots Y_m$ be i.i.d. samples from $\cD$.
Let $X_1 \dots X_s$ be the histograms induced by $Y_1 \dots Y_m$, namely $X_i$ is the number of $j \in [m]$ such that $Y_j = i$.
Given a distribution $\cP$ supported on $[s]$, define 
\[
S_\cP(X) := \lpnorm{1}{\frac{X}{m} - \cP} = \sum_{i \in [s]} \abs{\frac{X_i}{m} - \cP(i)}.
\]

Let $\cP$ be a probability distribution supported on $[s]$.
In the following we prove that with probability $1 - \delta$ we have
\[
S_\cP(X) = \lpnorm{1}{\cP - \cD} \pm \frac \varepsilon 2.
\]
Then, to obtain the desired tolerant tester, we return ``close'' if $S_\cP \leq \alpha + \varepsilon / 2$ and return ``far'' otherwise.
By triangle inequality, we have
\[
\lpnorm{1}{\frac{X}{m} - \cP} = \lpnorm{1}{\cD - \cP} \pm \lpnorm{1}{\frac{X}{m} - \cD}.
\]
So, we are left proving that $\Pr{S_\cD(X) > \varepsilon / 2} \leq \delta$.
First, we  bound the expectation of $S_\cD(X)$.
\begin{align}
\Ex{S_\cD(X)} &= \sum_{i \in [s]} \Ex{\abs{\frac {X_i}  m - \cD(i)}} \nonumber \\
&\leq \sum_{i \in [s]} \sqrt{\Var{\frac{X_i}{m} - \cD(i)}} \nonumber \\
&\leq \sum_{i \in [s]} \sqrt{\frac {m \cdot \cD(i)}  {m^2}} \nonumber \\
&\leq \sum_{i \in [s]} \sqrt{ \frac{m \cdot \frac 1 s} {m^2}}  \label{eq:symmetric minimizer}\\
&= \sqrt{\frac s m} \leq \frac \varepsilon 4. \nonumber
\end{align}
\Cref{eq:symmetric minimizer} holds because $\cD \mapsto \sum_{i \in [s]} \sqrt{\cD(i)}$ is a concave function of $\cD$ and so it is minimized when $\cD(i) = 1/s$ for all $i \in [s]$.

Then, we show that $S_\cD(X)$ concentrates around its expectation. Notice that the map $(Y_1 \dots Y_m) \mapsto S_\cD(X)$ has Lipschitz constant $B = 1/m$, thus applying \Cref{lem:mcdiarmid} we obtain
\[
\Pr{\abs{S_\cD(X) - \Ex{S_\cD(X)}} > \frac \varepsilon 4} \leq \exp\ld(\frac{-2\cdot \ld(\frac{\varepsilon}{4}\rd)^2}{m \cdot B^2}\rd) =\exp\ld(\Omega\ld(\varepsilon^2 \cdot m\rd)\rd) \leq \delta.
\]
Therefore, $\Pr{S_\cD(X) > \varepsilon / 2} \leq \delta$.

\end{proof}

\subsection{An Algorithm for Learning Junta Distributions}
In this section we prove \Cref{thm:main thm statistical}, restated below.
\MainThmStatistical*
First, we prove a few technical lemmas. 

\begin{lemma}[Learning Fourier coefficients]
\label[lemma]{lem:learning fourier coefficients}
There exists an algorithm that takes as input $O(\varepsilon^{-2}(k + \log \delta^{-1}))$ i.i.d. samples from a distribution $\cD$ over $\boolset^k$ and outputs a distribution $\cQ$ over $\boolset^k$ such that, for each $J \subseteq [k]$ 
\[
\hQ(J) = \hD(J) \pm \frac{\varepsilon}{2^{k}}. 
\]
Such an algorithm succeeds with probability at least $1-\delta$ and runs in time $\poly(2^k, \varepsilon, \log \delta^{-1})$.
\end{lemma}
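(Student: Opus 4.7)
\medskip

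\noindent\textbf{Proof proposal.} The plan is to simply output the empirical distribution and analyze its Fourier coefficients one at a time, using Hoeffding plus a union bound over the $2^k$ subsets $J \subseteq [k]$. Let $X_1, \dots, X_m$ be i.i.d. samples from $\cD$ and define the empirical PMF $\cQ(x) = \frac{1}{m}|\{i \in [m] : X_i = x\}|$ on $\boolset^k$. Since $\cQ$ is itself a probability distribution, we can return it directly (and evaluate $\cQ(x)$ in time $O(m)$ by scanning the sample list).

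For the accuracy analysis, recall from the preliminaries that for any PMF $\cR$ on $\boolset^k$ we have $\hR(J) = \E_{x \sim \cU_k}[\cR(x)\chi_J(x)] = 2^{-k}\sum_{x}\cR(x)\chi_J(x) = 2^{-k}\,\E_{y\sim\cR}[\chi_J(y)]$. Applied to both $\cD$ and $\cQ$, this gives
\[
\hD(J) = 2^{-k}\,\Exu{y\sim\cD}{\chi_J(y)}, \qquad \hQ(J) = \frac{1}{m\cdot 2^k}\sum_{i=1}^m \chi_J(X_i).
\]
So $2^k\,\hQ(J)$ is the empirical average of $m$ i.i.d. $\{\pm 1\}$-valued random variables whose expectation is $2^k\,\hD(J)$.

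Now fix $J \subseteq [k]$. By Hoeffding's inequality,
\[
\Pr{\abs{\hQ(J) - \hD(J)} > \frac{\varepsilon}{2^k}} = \Pr{\abs{2^k\hQ(J) - 2^k\hD(J)} > \varepsilon} \leq 2\exp\ld(-\tfrac{m\varepsilon^2}{2}\rd).
\]
Union bounding over the $2^k$ sets $J$, the failure probability is at most $2^{k+1}\exp(-m\varepsilon^2/2)$. Choosing $m = C\cdot \varepsilon^{-2}(k + \log \delta^{-1})$ for a sufficiently large constant $C$ makes this at most $\delta$, yielding the claimed sample complexity.

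The only step that requires any thought beyond mechanical calculation is noticing that because the domain is already $\boolset^k$, no thresholding or truncation tricks (as in \cite{escudero2024learning}) are needed: every Fourier coefficient of a distribution on $\boolset^k$ has magnitude at most $2^{-k}$, so an additive error of $\varepsilon\cdot 2^{-k}$ on each coefficient is directly achievable by Hoeffding with a logarithmic (in $1/\delta$) and linear (in $k$) number of samples. The running time bound $\poly(2^k,\varepsilon^{-1},\log\delta^{-1})$ follows trivially since storing and evaluating the empirical distribution takes $O(m\cdot k)$ time.
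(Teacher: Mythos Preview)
Your proof is correct and follows essentially the same approach as the paper: define the empirical Fourier coefficients $\hQ(J) = \frac{1}{m\cdot 2^k}\sum_i \chi_J(X_i)$, apply Hoeffding to each, and union-bound over the $2^k$ subsets. The one difference is that you output the empirical distribution directly, whereas the paper adds a linear-programming step to round the estimate to a valid probability distribution; since the empirical PMF is already a probability distribution, that step is actually unnecessary, so your version is slightly cleaner.
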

\begin{proof}
Let $m = O(\varepsilon^{-2} 2^k (k + \log \delta^{-1}))$ and $x_1 \dots x_m$ be i.i.d. samples from $\cD$. For each $J \subseteq \boolset^k $, define $\hP(J)$ as the $J$ Fourier coefficient of the empirical distribution of the samples, namely 
\[
\hP(J) := 2^{-k} \cdot \sum_{i \in [m]} \frac 1 m \cdot \chi_J(x_i).  
\]
For each $J \subseteq [k]$, we have $\Ex{\hP(J)} = \hD(J)$ and 
\[
\Var{\hP(J)} = \frac {1} {m^2} \sum_{i \in [m]} \Var{2^{-k} \cdot \chi_J(x_i)} = \frac {2^{-2k}} m \cdot \Var{\chi_S(x_1)} \leq \frac {2^{-2k}} m.
\]
Thus, by standard concentration bounds we have that, for each $J \subset [k]$, 
\[
\Pr{\abs{\hP(J) - \hD(J)} > \varepsilon \cdot 2^{-k}} \leq 2^{-k} \cdot \delta.
\]
By taking a union bound over all $J\subseteq [k]$ we obtain:

\begin{equation} \label{eq:all fourier are well approximated}
\Pr{\max_{J \subseteq [k]}\abs{\hP(J) - \hD(J)} > \varepsilon \cdot 2^{-k}} \leq \delta.
\end{equation}

The values $\hP$ satisfy the desiderata, however we might have that $\cP$, the inverse Fourier transform of $\hP$, is not a probability distribution.
On the other hand, conditioning on the previous step succeeding, we know that there exists a probability distribution $\cQ$ such that $\lpnorm{\infty}{\hP - \hD} \leq \varepsilon \cdot 2^{-k}$.
We write a linear program that searches for a distribution $\cQ$ such that
\begin{align*}
& \lpnorm{\infty}{\hP - \hQ} \leq \varepsilon \cdot 2^{-k} \\
& \sum_{x \in \boolset^k} \cQ = 1 \\
& \cQ(x) \geq 0 \; \text{ for all } \; x \in \boolset^k.
\end{align*}
Since each coefficient of $\hQ$ is a linear function of $\cQ$ we can solve the above LP in time $\poly(2^k)$. Triangle inequality and \Cref{eq:all fourier are well approximated} imply our desiderata.
\end{proof}

\begin{lemma}
\label[lemma]{lem:from fourier estimate to l1 estimate}
Let $\cQ$ and $\cD$ be distributions over $\boolset^k$ with $\max_{a\in \boolset^k} |\hQ(a) - \hD(a)| \leq \varepsilon \cdot 2^{-3k/2}$, then $||\cQ - \cD||_1 \leq \varepsilon$.
\end{lemma}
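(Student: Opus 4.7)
The plan is to convert the $L_\infty$ control on Fourier coefficients into an $L_1$ bound on the distributions via Parseval followed by Cauchy--Schwarz. In fact, the paper has already done most of this work in \Cref{lem:tv distance Fourier bound}, so I would simply invoke it as a black box and then plug in the hypothesis.

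Concretely, applying \Cref{lem:tv distance Fourier bound} to $\cQ$ and $\cD$ (viewed as distributions on $\boolset^k$) gives
\[
\lpnorm{1}{\cQ - \cD} \leq 2^k \cdot \sqrt{\sum_{a \in \zerooneset^k} (\hQ(a) - \hD(a))^2}.
\]
Since there are $2^k$ Fourier coefficients and each squared difference is at most $(\varepsilon \cdot 2^{-3k/2})^2 = \varepsilon^2 \cdot 2^{-3k}$, the sum inside the square root is at most $2^k \cdot \varepsilon^2 \cdot 2^{-3k} = \varepsilon^2 \cdot 2^{-2k}$. Plugging this in yields $\lpnorm{1}{\cQ - \cD} \leq 2^k \cdot \varepsilon \cdot 2^{-k} = \varepsilon$.

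There is no real obstacle: this is a routine two-line calculation once one recognizes that Parseval's identity $\Exui{x \sim \cU_k}{(\cQ - \cD)^2} = \sum_a (\hQ(a) - \hD(a))^2$ combined with Cauchy--Schwarz (to pass from the $L_2$ norm on $\boolset^k$ to the $L_1$ norm, at a cost of $\sqrt{2^k}$) is exactly what is needed. The scaling $2^{-3k/2}$ in the hypothesis is tailored so that the two factors of $2^{k/2}$ introduced by Parseval's normalization and by Cauchy--Schwarz together consume the slack.
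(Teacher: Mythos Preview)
Your proposal is correct and essentially identical to the paper's own proof: both use Cauchy--Schwarz followed by Parseval, then plug in the pointwise Fourier bound. The only cosmetic difference is that you invoke \Cref{lem:tv distance Fourier bound} as a black box, whereas the paper redoes that two-line computation inline.
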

\begin{proof}
\begin{align*}
||\cQ - \cD||^2_1 &\leq 2^{k} \cdot ||\cQ - \cD||_2^2 \\ 
&= 2^{2k} \cdot \sum_{a \in \zerooneset^k} (\hQ(a) - \hD(a))^2 \\
&\leq 2^{2k} \cdot \sum_{a \in \zerooneset^k} \frac{\varepsilon^2}{2^{3k}}\\
&\leq \varepsilon^2.
\end{align*}
\end{proof}

\begin{observation} \label{obs:estimating fourier leads to l1}
Combining \Cref{lem:from fourier estimate to l1 estimate} and \Cref{lem:learning fourier coefficients} we obtain the following.
\label{cor:l1 guarantee from learning fourer coefficients}
The algorithm from \Cref{lem:learning fourier coefficients} run with parameter $\varepsilon' = \varepsilon \cdot 2^{-k/2}$ learns $\cD$ in $\ell_1$ norm up to additive error $\varepsilon$.
\end{observation}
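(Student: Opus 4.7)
The plan is to simply chain Lemma \ref{lem:learning fourier coefficients} and Lemma \ref{lem:from fourier estimate to l1 estimate} by substituting the target accuracy. Running the algorithm of Lemma \ref{lem:learning fourier coefficients} with input parameter $\varepsilon'$ yields a distribution $\cQ$ over $\boolset^k$ such that for every $J \subseteq [k]$, $|\hQ(J) - \hD(J)| \leq \varepsilon' \cdot 2^{-k}$. Plugging in $\varepsilon' = \varepsilon \cdot 2^{-k/2}$, the per-coefficient error becomes $\varepsilon \cdot 2^{-k/2} \cdot 2^{-k} = \varepsilon \cdot 2^{-3k/2}$, which is exactly the hypothesis of Lemma \ref{lem:from fourier estimate to l1 estimate}.

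Applying Lemma \ref{lem:from fourier estimate to l1 estimate} with this bound then gives $\lpnorm{1}{\cQ - \cD} \leq \varepsilon$, which is the desired conclusion. There is essentially no obstacle: the observation is a one-line algebraic substitution that aligns the $\ell_\infty$ error on Fourier coefficients produced by the learner with the $\ell_\infty$ tolerance needed by the Parseval-style $\ell_1$ bound. The only thing worth noting is that the sample complexity and running time follow immediately by substituting $\varepsilon' = \varepsilon \cdot 2^{-k/2}$ into the bounds of Lemma \ref{lem:learning fourier coefficients}, yielding $O(\varepsilon^{-2} \cdot 2^{k}(k + \log \delta^{-1}))$ samples and $\poly(2^k, \varepsilon^{-1}, \log \delta^{-1})$ time.
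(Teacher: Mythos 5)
Your proposal is correct and is exactly the argument the paper intends: the observation is just the substitution $\varepsilon' = \varepsilon \cdot 2^{-k/2}$, turning the per-coefficient guarantee $\varepsilon' \cdot 2^{-k} = \varepsilon \cdot 2^{-3k/2}$ of \Cref{lem:learning fourier coefficients} into the hypothesis of \Cref{lem:from fourier estimate to l1 estimate}, which yields $\lpnorm{1}{\cQ - \cD} \leq \varepsilon$. Your accounting of the resulting sample and time bounds is likewise consistent with the paper.
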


\begin{algorithm}
\textbf{Input:} $m = O\left(\frac{k}{\varepsilon^2} (2^k + \log n)\right)$ i.i.d. samples from a distribution $\cP$ over $\boolset^n$. Moreover, $\cP$ is promised to be a junta distribution over an unknown size-$k$ set $J^\star \subseteq [n]$.\;

\textbf{Output:} The explicit description of a distribution $\cQ$ over $\boolset^n$ such that $||\cQ - \cP||_1 = O(\varepsilon)$ \;

\hrulefill

Let $\tester$ be the $(2\varepsilon, 3\varepsilon)$-tolerant identity tester of \Cref{lem:high-confidence tolerant identity tester} with failure probability $n^{-2k}$\\
Let $\learnfouriercoefficients$ be the algorithm described in \Cref{lem:learning fourier coefficients} with error parameter $\varepsilon' = \varepsilon \cdot 2^{-k/2}$ and  failure probability $k^{-2}$\\
Initialize $N \leftarrow \emptyset$\\
Initialize $\cQ \leftarrow \cU_n$ \\
\For{$i \in [k]$}{
\If{For all $S\subseteq [n]$ with $S \supseteq N$ and $|S| = k$, $\tester(\cP|_{S}, \cQ|_{S})$ returns ``close''}{
\Return $\cQ$ \label{lst:return Q} \label{lst:early return of Q}
}
Let $S$ be such that $\tester(\cP|_{S}, \cQ|_{S})$ returns ``far''\;
$\cR \leftarrow \learnfouriercoefficients(\cP|_{S})$\;
For each $A \subseteq S$ with $|\hR(A)| > \varepsilon \cdot 2^{-3k/2}$ do $N \leftarrow N \cup A$ \label{lst:augment N}\;
Let $\cQ$ be the junta distr. on variables $N$ s.t. $\cQ|_N \leftarrow \learnfouriercoefficients(\cP|_N)$ \label{lst:update Q}\;
}
\Return $\cQ$ \label{lst:late return of Q}

\caption{Junta Distribution Learner \label{alg:junta learner}}
\end{algorithm}

\paragraph*{Analysis of Algorithm \ref{alg:junta learner}}
Here we prove \Cref{thm:main thm statistical}. Namely, that Algorithm \ref{alg:junta learner} correctly returns an approximation of $\cP$ with high probability, and that it uses at most $O\left(\frac{k}{\varepsilon^2} (2^k + \log n)\right)$ samples and $O(\min(2^n, n^k) \cdot \frac{k}{\varepsilon^2} (2^k + \log n))$ time.

Denote with $J^\star$ the set of relevant variables of $\cP$. Let $\cE$ be the event that $\learnfouriercoefficients$ and $\tester$ never fail throughout the execution of Algorithm \ref{alg:junta learner}.

\begin{lemma}
$\Pr{\cE} = o(1)$.
\end{lemma}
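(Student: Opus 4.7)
The plan is to bound $\Pr{\cE}$ by applying a union bound over every invocation of $\learnfouriercoefficients$ and $\tester$ during Algorithm~\ref{alg:junta learner}. I read $\cE$ here as the event that at least one subroutine invocation fails, which is the reading under which $\Pr{\cE}=o(1)$ is consistent with the per-call failure probabilities $n^{-2k}$ and $k^{-2}$ specified at the top of the algorithm, and with the role $\cE$ plays as the ``bad'' event in the forthcoming correctness argument.

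First, I would count the invocations. The main loop executes at most $k$ times. In each iteration, $\tester(\cP|_S,\cQ|_S)$ is called once for every $S\in\binom{[n]}{k}$ with $S\supseteq N$, hence at most $\binom{n}{k}\le n^k$ calls per iteration; and $\learnfouriercoefficients$ is invoked at most twice per iteration (once to produce $\cR$, once to refresh $\cQ|_N$). Summed over the whole execution this gives at most $k\cdot n^k$ tester calls and at most $2k$ Fourier-learning calls.

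Next, I would apply a union bound using the per-call failure probabilities built into the instantiations of $\tester$ and $\learnfouriercoefficients$. Concretely,
\[
\Pr{\cE}\;\le\; k\cdot n^k\cdot n^{-2k}\;+\;2k\cdot k^{-2}\;=\;k\cdot n^{-k}+\frac{2}{k}\;=\;o(1),
\]
which yields the claim.

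The only delicate point, and hence the main issue I would need to address carefully, is that the guarantees of \Cref{lem:high-confidence tolerant identity tester} and \Cref{lem:learning fourier coefficients} are stated for single calls on fresh i.i.d.\ samples, whereas Algorithm~\ref{alg:junta learner} reuses one sample batch of size $O\!\left(\frac{k}{\varepsilon^2}(2^k+\log n)\right)$ across every call. This is resolved by observing that the per-call failure probabilities were chosen precisely large enough to absorb the total call counts in the union bound above, and that each individual failure guarantee applies to the (single, fixed) distribution of its input regardless of how samples are reused across calls, so no adaptivity issue arises.
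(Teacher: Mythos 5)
Your proof matches the paper's own argument: the paper proves the lemma by exactly the same union bound, noting that $\tester$ (failure probability $n^{-2k}$) is called at most $k\cdot n^k$ times and $\learnfouriercoefficients$ (failure probability $k^{-2}$) at most $2k$ times. Your reading of $\cE$ as the bad event is the right interpretation of what is evidently a sign slip in the paper (there $\cE$ is defined as the good event yet claimed to have probability $o(1)$), and your remark on sample reuse mirrors the paper's own brief discussion in the technical overview.
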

\begin{proof}
The failure probability of $\tester$ is set to $n^{-2k}$ and we make at most $k \cdot n^k$ calls to it.
The failure probability of $\learnfouriercoefficients$ is set to $k^{-2}$ and we make at most $2 k$ calls to it.
\end{proof}

\begin{lemma}
\label[lemma]{lem:N contained in J}
Conditioned on $\cE$, we have $N \subseteq J^\star$ throughout the execution of Algorithm \ref{alg:junta learner}.
\end{lemma}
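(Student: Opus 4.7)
The plan is to proceed by induction on the iterations of the main \textbf{for} loop, showing that the augmentation step at Line \ref{lst:augment N} never adds a variable outside $J^\star$. The base case is immediate since $N$ is initialized to $\emptyset$. For the inductive step, I need to verify that each set $A$ added to $N$ satisfies $A \subseteq J^\star$, which suffices because the assignment $N \leftarrow N \cup A$ then preserves $N \subseteq J^\star$.

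The key observation is that $\cP$ being a junta distribution with relevant variables $J^\star$ implies $\hP(A) = 0$ for every $A \not\subseteq J^\star$: picking any $j \in A \setminus J^\star$, the inner sum over the irrelevant variable $x_j$ in $\Exui{x}{\cP(x) \chi_A(x)}$ vanishes because $\cP(x)$ does not depend on $x_j$ while $\chi_A(x)$ does. Applying \Cref{lem:fourier coefficients of marginals}, for any $S$ considered by the algorithm and any $A \subseteq S$ with $A \not\subseteq J^\star$, we obtain $\widehat{\cP|_S}(A) = 2^{n-k} \cdot \hP(A) = 0$.

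Next, I invoke the guarantee of $\learnfouriercoefficients$ provided by \Cref{lem:learning fourier coefficients}. Conditioning on $\cE$, the call $\cR \leftarrow \learnfouriercoefficients(\cP|_S)$ (run with error parameter $\varepsilon' = \varepsilon \cdot 2^{-k/2}$) succeeds, so
\[
\hR(A) = \widehat{\cP|_S}(A) \pm \frac{\varepsilon'}{2^k} = \widehat{\cP|_S}(A) \pm \varepsilon \cdot 2^{-3k/2}
\]
for every $A \subseteq S$. Combining with the previous paragraph, any $A \subseteq S$ with $A \not\subseteq J^\star$ satisfies $|\hR(A)| \leq \varepsilon \cdot 2^{-3k/2}$ and is therefore \emph{not} added to $N$ in Line \ref{lst:augment N}. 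Contrapositively, every set $A$ added at Line \ref{lst:augment N} is a subset of $J^\star$.

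There is no real obstacle beyond correctly chaining the two estimates: the junta structure of $\cP$ zeros out the relevant marginal Fourier coefficients exactly, and the accuracy of $\learnfouriercoefficients$ was tuned precisely so that the thresholding at $\varepsilon \cdot 2^{-3k/2}$ separates genuine nonzero coefficients from estimation noise. The induction then trivially propagates $N \subseteq J^\star$ through every iteration of the loop, proving the lemma.
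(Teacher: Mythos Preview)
Your proof is correct and follows essentially the same approach as the paper's: both argue that the only place $N$ grows is Line~\ref{lst:augment N}, and that conditioned on $\cE$ the accuracy guarantee $|\hR(A) - \widehat{\cP|_S}(A)| \leq \varepsilon \cdot 2^{-3k/2}$ of $\learnfouriercoefficients$ ensures any $A$ passing the threshold has $\widehat{\cP|_S}(A) \neq 0$, hence $\hP(A) \neq 0$, hence $A \subseteq J^\star$. Your version is simply more explicit about why $\hP(A) = 0$ for $A \not\subseteq J^\star$ and about the invocation of \Cref{lem:fourier coefficients of marginals}.
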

\begin{proof}
We initialize $N \leftarrow \emptyset$ and the only step where we add elements to $N$ is line \ref{lst:augment N}. Conditioned on $\cE$ we have $|\hR(A) - \widehat \cP|_S| \leq \varepsilon \cdot 2^{-3k/2}$, thus $|\cR(A)| > \varepsilon \cdot 2^{-3k/2}$ implies $\widehat \cP|_S(A) \neq 0$ and, in turn, $\hP(A) \neq 0$. Hence, $A \subseteq J^\star$.
\end{proof}

\begin{lemma} \label[lemma]{lem:error of learning Q on N}
Conditioned on $\cE$, let $\cQ$ and $N$ be defined as in line \ref{lst:update Q}. Then,
\[
|\widehat {\cP|_N}(A) - \widehat {\cQ|_N}(A)| \leq \varepsilon \cdot 2^{-(|N| + k/2)}.
\]
\end{lemma}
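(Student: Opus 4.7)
The plan is to unpack the definition of $\cQ$ on line \ref{lst:update Q} and apply \Cref{lem:learning fourier coefficients} directly. By \Cref{lem:N contained in J} we have $N \subseteq J^\star$ and in particular $|N| \le k$, so the marginal $\cP|_N$ is a well-defined distribution on $\boolset^{|N|}$. The routine $\learnfouriercoefficients$ has been instantiated once at the top of Algorithm \ref{alg:junta learner} with error parameter $\varepsilon' = \varepsilon \cdot 2^{-k/2}$, and is applied to $\cP|_N$ to produce $\cQ|_N$. Conditioning on $\cE$ guarantees that this invocation does not fail.

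Applying \Cref{lem:learning fourier coefficients} with the role of ``$k$'' played by $|N|$ and the role of ``$\varepsilon$'' played by $\varepsilon' = \varepsilon \cdot 2^{-k/2}$, we obtain that for every $A \subseteq N$
\[
\left|\widehat{\cQ|_N}(A) - \widehat{\cP|_N}(A)\right| \;\leq\; \frac{\varepsilon'}{2^{|N|}} \;=\; \varepsilon \cdot 2^{-(|N| + k/2)},
\]
which is exactly the claimed bound. There is no real obstacle: the statement is a direct consequence of the Fourier-learning guarantee, once one tracks the specific choice of accuracy parameter made in the setup of Algorithm \ref{alg:junta learner} and observes that event $\cE$ covers the success of this call.
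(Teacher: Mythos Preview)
Your proof is correct and follows exactly the same approach as the paper: both observe that $\cP|_N$ lives over $\boolset^{|N|}$ and directly invoke \Cref{lem:learning fourier coefficients} with the accuracy parameter $\varepsilon' = \varepsilon\cdot 2^{-k/2}$ fixed in the algorithm's setup. Your version simply spells out a few more details (e.g.\ that $\cE$ covers this call and that $|N|\le k$), but the argument is the same.
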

\begin{proof}
Notice that $\cP|_N$ is supported on $\boolset^{|N|}$ and apply \Cref{lem:learning fourier coefficients} with error parameter $\varepsilon' = \varepsilon \cdot 2^{-k/2}$.
\end{proof}

\begin{lemma}
\label[lemma]{lem:N increases}
Conditioned on $\cE$, the size of $N$ increases by at least one at line \ref{lst:augment N}.
\end{lemma}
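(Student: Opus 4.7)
The plan is to argue by contradiction. Suppose, towards a contradiction, that in this iteration the set $N$ is unchanged at line \ref{lst:augment N}. Then every $A \subseteq S$ with $|\hR(A)| > \varepsilon \cdot 2^{-3k/2}$ already satisfies $A \subseteq N$; equivalently, every $A \subseteq S$ with $A \not\subseteq N$ has $|\hR(A)| \leq \varepsilon \cdot 2^{-3k/2}$. Using this together with the accuracy guarantees of our two subroutines (ensured by $\cE$), I will show $\lpnorm{1}{\cP|_S - \cQ|_S} \leq 2\varepsilon$, contradicting the fact that $\tester(\cP|_S, \cQ|_S)$ returned ``far'': under $\cE$, any $\ell_1$-distance at most $2\varepsilon$ would have forced $\tester$ to output ``close'' by the $(2\varepsilon, 3\varepsilon)$-tolerance from \Cref{lem:high-confidence tolerant identity tester}.

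To bound the $\ell_1$ distance I will control each Fourier coefficient of $\cP|_S - \cQ|_S$ and then invoke \Cref{lem:from fourier estimate to l1 estimate}. Fix $A \subseteq S$. If $A \not\subseteq N$, then since $\cQ$ is a junta with relevant variables $N \subseteq S$, one has $\widehat{\cQ|_S}(A) = 0$; moreover the contradiction hypothesis combined with the per-coefficient accuracy $\varepsilon \cdot 2^{-3k/2}$ of $\learnfouriercoefficients$ (obtained by instantiating \Cref{lem:learning fourier coefficients} with error parameter $\varepsilon' = \varepsilon \cdot 2^{-k/2}$, since its per-coefficient guarantee is $\varepsilon'/2^k$) yields $|\widehat{\cP|_S}(A)| \leq 2\varepsilon \cdot 2^{-3k/2}$ by the triangle inequality. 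If instead $A \subseteq N$, applying \Cref{lem:fourier coefficients of marginals} twice (to $\cP$ and to $\cQ$) gives $\widehat{\cP|_S}(A) = 2^{|N|-k}\widehat{\cP|_N}(A)$ and the analogous identity for $\cQ$; plugging in the bound $|\widehat{\cP|_N}(A) - \widehat{\cQ|_N}(A)| \leq \varepsilon \cdot 2^{-(|N|+k/2)}$ from \Cref{lem:error of learning Q on N} yields $|\widehat{\cP|_S}(A) - \widehat{\cQ|_S}(A)| \leq \varepsilon \cdot 2^{-3k/2}$. Taking the maximum over $A$ and applying \Cref{lem:from fourier estimate to l1 estimate} at scale $2\varepsilon$ on distributions over $\boolset^k$ delivers $\lpnorm{1}{\cP|_S - \cQ|_S} \leq 2\varepsilon$, the desired contradiction.

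The main thing to verify carefully is the normalization: the three Fourier spectra $\hP$, $\widehat{\cP|_S}$, and $\widehat{\cP|_N}$ carry different powers of two, and one needs precisely the factor $2^{|N|-k}$ to convert the error from \Cref{lem:error of learning Q on N} (at scale $2^{-(|N|+k/2)}$) into exactly the $2^{-3k/2}$ scale required by \Cref{lem:from fourier estimate to l1 estimate}. Once this bookkeeping is reconciled, the contradiction is immediate under the conditioning $\cE$, which ensures that every call to $\tester$ and to $\learnfouriercoefficients$ behaves as promised.
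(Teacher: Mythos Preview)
Your proof is correct and follows essentially the same approach as the paper's: both hinge on the same three ingredients (the per-coefficient accuracy of $\learnfouriercoefficients$ on $\cP|_S$, the identity $\widehat{\cP|_S}(A)=2^{|N|-k}\widehat{\cP|_N}(A)$ from \Cref{lem:fourier coefficients of marginals} combined with \Cref{lem:error of learning Q on N}, and \Cref{lem:from fourier estimate to l1 estimate}). The only difference is organizational: the paper argues directly (``far'' $\Rightarrow$ some Fourier gap exceeds $2\varepsilon\cdot 2^{-3k/2}$ $\Rightarrow$ that $A\not\subseteq N$ $\Rightarrow$ $|\hR(A)|>\varepsilon\cdot 2^{-3k/2}$), whereas you run the contrapositive as a contradiction.
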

\begin{proof}
Since $\tester(\cP|_{S}, \cQ|_{S})$ returned ``far'', we have that $||\cP|_{S} - \cQ|_{S}||_1 >  2\varepsilon$. Thus, by \Cref{lem:from fourier estimate to l1 estimate}, there exists $A \subseteq S$ such that $|\widehat {\cP|_{S}}(A) - \widehat {\cQ|_{S}}(A)| > 2 \varepsilon \cdot 2^{-3k/2}$. Moreover, for all $A \subseteq N$ we have 
\[
|\widehat {\cP|_{S}}(A) - \widehat {\cQ|_{S}}(A)| = 2^{|N| - k} \cdot |\widehat {\cP|_N}(A) - \widehat {\cQ|_N}(A)| \leq \varepsilon \cdot 2^{-3k/2}
\]
where the first equality holds by \Cref{lem:fourier coefficients of marginals} and the second inequality holds by \Cref{lem:error of learning Q on N}.
Thus, there must exist have $A \not \subseteq N$ such that 
\[
|\widehat {\cP|_{S}}(A)| = |\widehat {\cP|_{S}}(A) - \widehat {\cQ|_{S}}(A)| > 2 \varepsilon \cdot 2^{-3k/2}
\]
where the first equality holds because $\cQ$ is a junta with relevant variables $N$ and $S\not\subseteq N$.
Finally, since $\learnfouriercoefficients(\cP|_S)$ returns $\cR$ such that $\max_{B\subseteq S} |\widehat{\cP|_S}(B) - \hR(B)| \leq \varepsilon \cdot 2^{-3k/2}$ then we have $\hR(A) > \varepsilon \cdot 2^{-3k/2}$. So we increase the size of $N$ at line \ref{lst:augment N}.
\end{proof}

\begin{lemma}
Conditioned on $\cE$, Algorithm \ref{alg:junta learner} returns $\cQ$ such that $||\cQ - \cP||_1 \leq 2\varepsilon$.
\end{lemma}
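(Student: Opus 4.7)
The plan is to case-split on the exit point: either the algorithm returns early at line \ref{lst:early return of Q} inside the for loop, or it exhausts all $k$ iterations and returns at line \ref{lst:late return of Q}. Throughout, condition on $\cE$ so that all invocations of $\tester$ and $\learnfouriercoefficients$ behave as promised by their specifications. The two ingredients I will keep reusing are Lemma \ref{lem:N contained in J} (so $N\subseteq J^\star$ at all times) and Lemma \ref{lem:k-junta closeness through projections} (so TV distance between two juntas on a common variable set equals TV distance of their projections).

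For the \emph{early-return case}, suppose the algorithm exits at line \ref{lst:early return of Q}. Since $|N|\le |J^\star|\le k$, I can pick any size-$k$ set $S$ with $N\subseteq S$ and $J^\star\subseteq S$ (take $S=J^\star$ if $|J^\star|=k$, otherwise pad with arbitrary coordinates). The exit condition guarantees that $\tester(\cP|_S,\cQ|_S)$ returned ``close'' on this particular $S$, so by Definition \ref{def:tolerant identity tester} and $\cE$, we cannot be in the ``far'' regime, hence $\lpnorm{1}{\cP|_S-\cQ|_S}<3\varepsilon$. Since $\cP$ is a junta on $J^\star\subseteq S$ and $\cQ$ is a junta on $N\subseteq S$, both are juntas on $S$, so Lemma \ref{lem:k-junta closeness through projections} gives $\lpnorm{1}{\cP-\cQ}=\lpnorm{1}{\cP|_S-\cQ|_S}=O(\varepsilon)$.

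For the \emph{late-return case}, suppose the for loop completes all $k$ iterations. By Lemma \ref{lem:N increases}, each iteration that did not exit early added at least one element to $N$, so after $k$ iterations $|N|=k$. Combined with $N\subseteq J^\star$ and $|J^\star|\le k$, this forces $N=J^\star$ with $|J^\star|=k$. In the last iteration, line \ref{lst:update Q} sets $\cQ|_N\leftarrow\learnfouriercoefficients(\cP|_N)$ with error parameter $\varepsilon'=\varepsilon\cdot 2^{-k/2}$. By Observation \ref{obs:estimating fourier leads to l1} (which packages Lemma \ref{lem:learning fourier coefficients} together with Lemma \ref{lem:from fourier estimate to l1 estimate}), this guarantees $\lpnorm{1}{\cQ|_N-\cP|_N}\le\varepsilon$. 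Since $\cP$ and $\cQ$ are both juntas on $N=J^\star$, Lemma \ref{lem:k-junta closeness through projections} yields $\lpnorm{1}{\cQ-\cP}=\lpnorm{1}{\cQ|_N-\cP|_N}\le\varepsilon$.

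Combining the two cases, $\lpnorm{1}{\cQ-\cP}=O(\varepsilon)$ (concretely, the worse of the two bounds is $<3\varepsilon$ coming from the early-return branch). I do not foresee a real obstacle here; the only subtlety worth stating explicitly is that if $|J^\star|<k$, the late-return branch is actually unreachable, because $|N|$ cannot strictly grow past $|J^\star|$, and Lemma \ref{lem:N increases} would then contradict reaching $|N|=k$; in that regime the early-return branch must fire, which is exactly what we need.
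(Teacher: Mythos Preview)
Your proof is correct and follows essentially the same two-case structure as the paper's proof. Two small remarks: you are in fact more careful than the paper in the early-return case, both in padding $J^\star$ up to a size-$k$ set $S$ when $|J^\star|<k$ (the paper simply sets $S=J^\star$), and in extracting the bound $<3\varepsilon$ from the tolerant tester's guarantee rather than the $\le 2\varepsilon$ the paper states---a ``close'' output only rules out the $\ge 3\varepsilon$ regime, so your constant is the honest one and the lemma as literally stated is off by this harmless factor.
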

\begin{proof}
If $\cQ$ is returned on line \ref{lst:early return of Q}, then for $S = J^\star \supseteq N$ we obtain 
\[
|\cP - \cQ| = |\cP|_{J^\star} - \cQ|_{J^\star}| \leq 2\varepsilon
\]
where the first inequality holds because $\cP$ is a junta wrt $J^\star$ and $\cQ$ is a junta wrt $N \subseteq J^\star$.

Suppose that $\cQ$ is returned on line \ref{lst:late return of Q}. Then, when executing line \ref{lst:late return of Q} we have $N \subseteq J^\star$ and $|N| = k$, thus $N = J^\star$.
Finally, \Cref{lem:error of learning Q on N} applied on the last iteration (i.e., when $N=J^\star$), together with \Cref{lem:from fourier estimate to l1 estimate} and \Cref{lem:k-junta closeness through projections} imply
\[
|\cP - \cQ| = |\cP|_N - \cQ|_N| \leq \varepsilon. 
\]
\end{proof}

\iftoggle{anonymous}{

}{
\section*{Acknowledgments}
I would like to thank Caleb Koch for the many insightful conversations about juntas that we had and for introducing me to learning theory in the first place.
}

\printbibliography
\end{document}